\newtheorem{theorem}{Theorem}
\newtheorem{proposition}{Proposition}
\theoremstyle{definition}
\theoremstyle{remark}
\newcommand{\cifartwo}{CIFAR-2}
\newcommand{\cifarten}{CIFAR-10}
\newcommand{\cifarhund}{CIFAR-100}
\newcommand{\svhn}{SVHN}
\newcommand{\imagenette}{ImagenetTe}
\newcommand{\imagenet}{Imagenet}
\newcommand{\imagenettiny}{Tiny-Imagenet}
\newcommand{\vitmae}{ViT-MAE}
\newcommand{\huggingface}{HuggingFace}
\newcommand{\contrastive}{\textit{CRoPD}} %
\newcommand{\vanilla}{\textit{Vanilla}}
\newcommand{\arae}{\textit{ARAE}}
\newcommand{\vae}{\textit{VAE}}
\newcommand{\none}{\textit{Identity}}
\newcommand{\ellcon}{\ell_{\mathrm{con}}}
\newcommand{\ellsup}{\ell_{\mathrm{sup}}}
\newcommand{\Lcon}{L_{\mathrm{con}}}
\newcommand{\xadv}{x^{\mathrm{adv}}}
\newcommand{\fpre}{f_{\mathrm{pre}}}
\newcommand{\fen}{f_{\mathrm{en}}}
\newcommand{\fde}{f_{\mathrm{de}}}
\newcommand{\flast}{f_{\mathrm{last}}}
\newcommand{\adv}{\mathrm{adv}}
\newcommand{\negative}{\mathrm{neg}}
\newcommand{\cossim}{\mathrm{sim}}
\DeclareRobustCommand\onedot{\futurelet\@let@token\@onedot}
\def\@onedot{\ifx\@let@token.\else.\null\fi\xspace}
\def\eg{\emph{e.g}\onedot}
\newcommand{\calvin}[1]{{\color{cyan}CH: #1}}
\newcommand{\yue}[1]{{\color{purple}Yue: #1}}
\newcommand{\mq}[1]{{\color{orange}Meiqi: #1}}
\newcommand{\eat}[1]{}
\newcommand{\TODO}[1]{\textbf{\color{red}[TODO: #1]}}
\newcommand{\calvin}[1]{}
\newcommand{\yue}[1]{}
\newcommand{\mq}[1]{}
\newcommand{\eat}[1]{}
\newcommand{\TODO}[1]{}
\icmltitlerunning{Enhance Downstream Adversarial Robustness}
\begin{document}

\twocolumn[
    \icmltitle{How to Enhance Downstream Adversarial Robustness (almost) without\\ Touching the Pre-Trained Foundation Model?}

    \icmlsetsymbol{equal}{*}

    \begin{icmlauthorlist}
        \icmlauthor{Meiqi Liu}{msu,equal}
        \icmlauthor{Zhuoqun Huang}{unimelb,equal}
        \icmlauthor{Yue Xing}{msu}
    \end{icmlauthorlist}

    \icmlaffiliation{msu}{Department of Statistics, Michigan State University, Michigan, United States}
    \icmlaffiliation{unimelb}{School of Computing and Information Systems, University of Melbourne, Melbourne, Australia}

    \icmlcorrespondingauthor{Meiqi Liu}{liumeiqi@msu.edu}
    \icmlcorrespondingauthor{Zhuoqun Huang}{calvin.huang@unimelb.edu.au}

    \icmlkeywords{Machine Learning, ICML}

    \vskip 0.3in
]

\printAffiliationsAndNotice{\icmlEqualContribution} %

\begin{abstract}
    With the rise of powerful foundation models, a pre-training-fine-tuning paradigm becomes increasingly popular these days:
    A foundation model is pre-trained using a huge amount of data from various sources, and then the downstream users only need to fine-tune and adapt it to specific downstream tasks.
    However, due to the high computation complexity of adversarial training, it is not feasible to fine-tune the foundation model to improve its robustness on the downstream task.
    Observing the above challenge, we want to improve the downstream robustness without updating/accessing the weights in the foundation model.
    Inspired from existing literature in robustness inheritance \citep{kim2020adversarial}, through theoretical investigation, we identify a close relationship between robust contrastive learning with the adversarial robustness of supervised learning.
    To further validate and utilize this theoretical insight, we design a simple-yet-effective robust auto-encoder as a data pre-processing method before feeding the data into the foundation model.
    The proposed approach has zero access to the foundation model when training the robust auto-encoder.
    Extensive experiments demonstrate the effectiveness of the proposed method in improving the robustness of downstream tasks, verifying the connection between the feature robustness (implied by small adversarial contrastive loss) and the robustness of the downstream task.

\end{abstract}

\section{Introduction}
\label{sec:intro}

In recent years, the development of foundation models has inspired people to consider a new training paradigm:
Instead of training all layers of the neural network, the base large neural network is first trained by one party with a huge amount of data from various sources (pre-training).
Then, the downstream users tune the last layers to adapt to the specific downstream tasks (fine-tuning) \citep{yang2023foundation}.

Meanwhile, although recent advances in deep learning and machine learning have led to breakthrough performance and have been widely applied in practice, both empirical evidence (\eg , \citep{madry2017towards}) and theoretical investigations (\eg , \citep{haldar2024effect}) reveal that deep learning models can be fragile and vulnerable against adversarial input which is intentionally perturbed to mislead the model.
To improve the robustness of these models, adversarial training is one of the most popular ways \citep{madry2017towards}.

With the new pre-training-fine-tuning paradigm, many studies consider improving adversarial robustness in the downstream task using adversarial pre-training and clean fine-tuning.
For example, some works empirically observe the robustness of a robust pre-trained model using robust contrastive learning being inherited to  downstream tasks \citep{shafahi2019adversarially,salman2020adversarially,deng2021adversarial,zhang2021pre,kim2020adversarial,fan2021does}.

However, although the rise of the pre-training-and-fine-tuning paradigm provides one possible way to reduce the computation cost for the downstream users, concerns have been raised regarding the computational cost of adversarial training: Compared to clean training, the cost of adversarial training is much higher since the attacks are recalculated in each training iteration.
For example, while clean training of ResNet18 for CIFAR-10 takes 1 hour on a single NVIDIA V100 GPU, adversarial training can take more than 20 hours \citep{rice2020overfitting}.
Consequently, in the pre-training-fine-tuning paradigm, though pre-training parties are mostly resource-abundant entities like OpenAI,
adversarial training can still be burdensome as the computation cost to clean pre-train a GPT-3 alone is already estimated to be up to \$$4.6$m \citep{cost2024}.
The infeasible adversarial training cost leaves an open question to be answered:
\begin{center}
    \textit{How to leverage adversarial training in foundation models to maintain a low computation cost?}
\end{center}

\begin{figure*}[!ht]
    \centering
    \includegraphics[scale=0.18]{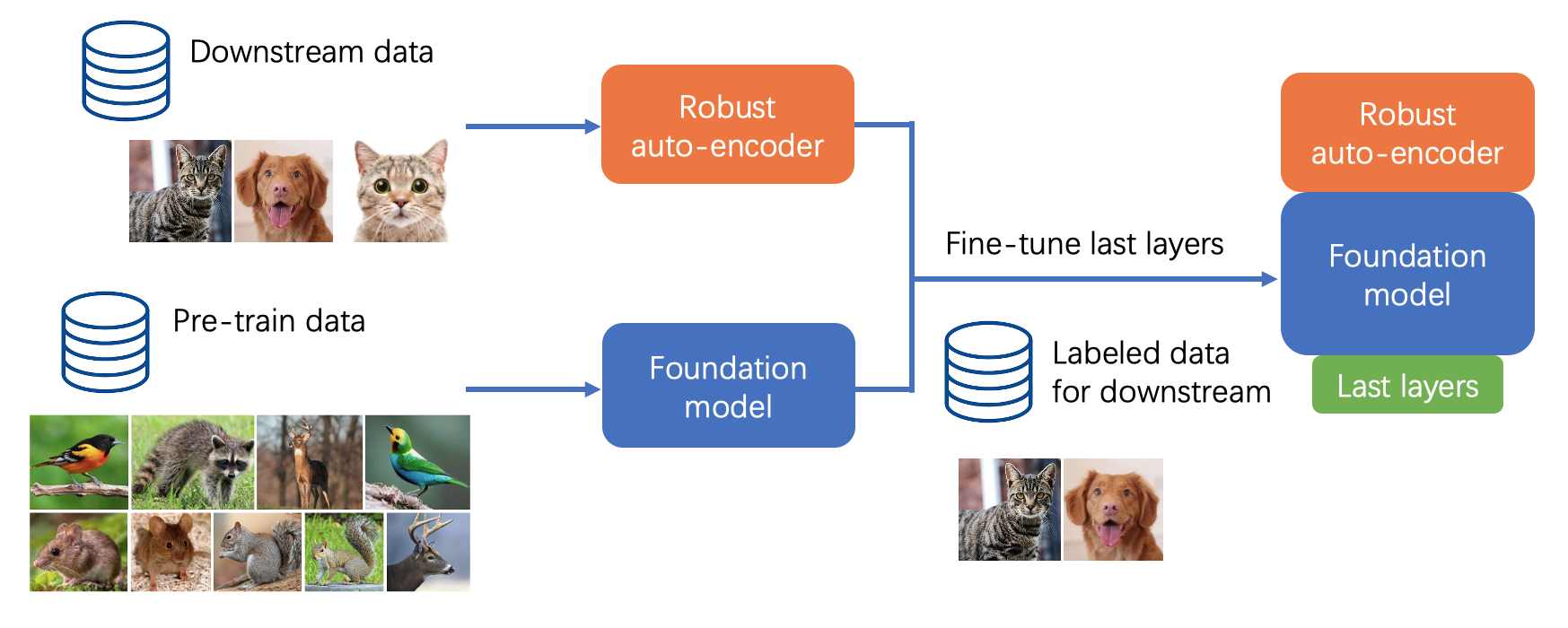}\vspace{-0.1in}
    \caption{Use a robust auto-encoder to pre-process the downstream data. After obtaining the pre-trained foundation model, we use adversarial training to train a robust auto-encoder via leveraging adversarial contrastive loss. A robust auto-encoder is used to pre-process downstream data. These pre-processed inputs are then fed into the foundation model. }
    \label{fig:adv_contra}
\end{figure*}

To address the above challenge, we provide a theoretical analysis to bound the downstream adversarial loss using adversarial contrastive loss. Further, based on the theoretical insights,
we design a robust pre-processor to validate the importance of feature robustness (i.e., small adversarial contrastive loss), and pursue robustness in the downstream tasks with only clean foundation models.

Our main contributions are summarized as follows:
\begin{itemize} \vspace{-.1in}
    \item We provide a theoretical analysis of the downstream adversarial loss. Based on our derivation, for classification tasks with cross-entropy loss, the downstream adversarial loss can be upper-bounded by a combination of the downstream clean loss and the adversarial contrastive loss (Theorem \ref{thm:downstream}).
            \vspace{-.05in}
    \item In addition, in an (auto-encoder + foundation model + last layer adaptation) system,
            naively trained auto-encoders with only reconstruction loss cannot effectively output robust data (Proposition \ref{pro:adv loss}).
            \vspace{-.05in}
    \item Inspired from the above theoretical insights, we consider the following robust data pre-processor as in Figure~\ref{fig:adv_contra}: We train an auto-encoder leveraging adversarial contrastive loss to obtain a robust auto-encoder as a data pre-processor and feed the corrupted data to the pre-processor first and then feed the foundation model with the output of the pre-processor.

            Specifically, the robust auto-encoder consists of an encoder-decoder structure, which is trained in an unsupervised manner without requiring labels. The pre-processed output is then passed to the foundation model for downstream tasks. The training of the auto-encoder is guided by adversarial contrastive loss, ensuring that the latent representations are robust against adversarial perturbations while maintaining the unsupervised nature of the learning process.

            While existing literature, e.g., \citep{salehi2021arae,zhou2023eliminating}, attempts to add adversarial training, they need to access the foundation model in adversarial training. In contrast, we do not access the foundation model when training the robust pre-processor.
            We name our simple-yet-effective approach as  \textbf{C}ontrastive \textbf{Ro}bust \textbf{P}reprocessing \textbf{D}efense (\contrastive).
            \vspace{-.05in}
    \item We empirically evaluate the robustness of \contrastive\ using multiple datasets. Our results show a significant improvement in robustness in downstream tasks with only a clean foundation model.
            The empirical observations echo our theoretical observations, and highlight the importance of the feature robustness in robust supervised learning.

\end{itemize}

\section{Preliminaries}

\subsection{Data Distribution}

We consider classification in the downstream task. Given a dataset \( \mathcal{D} = \{(x_i, y_i)\}_{i=1}^n \), where \( x_i \) is the input data and \( y_i \in \mathcal{Y} \) is the corresponding target values with label space $\mathcal{Y}$, we assume that these data points are drawn from an unknown joint probability distribution \( q(x, y) \).

\subsection{Adversarial Training}

To formulate adversarial training, we first define adversarial attack.
Given a loss function $\ell(\cdot,\cdot)$ and a model $f(\cdot)$,
an adversarial attack aims to figure out the worst-case perturbation which maximizes the loss, i.e.,

\begin{eqnarray*}
    \xadv \triangleq \arg\max_{ \bar{x}\in\mathcal{A}(x)} \ell(f(\bar{x}),\cdot),
\end{eqnarray*}
where \( \mathcal{A}(x) \), referred to as the threat model, defines the set of permissible adversarial perturbations for the input \( x \).
Specifically,
\[
    \mathcal{A}(x) = \{ \bar{x} : \| \bar{x} - x \|_p \leq \epsilon \},
\]
where \( p \)-norm (\( p = 2 \) or \( \infty \)) determines the metric used, and \( \epsilon \) specifies the attack budget. For example, when \( p = \infty \), \( \mathcal{A}(x) \) forms an \( \mathcal{L}_\infty \)-ball around \( x \) with radius \( \epsilon \), i.e., the pixel attack in image data.
The second argument of $\ell(\cdot,\cdot)$ can be either the label for supervised learning (classification), or $f(x')$ for some other data $x'$ in contrastive learning.

After defining the attack, adversarial training is a generic algorithm to iteratively train the neural network: In each iteration, given the current model $f$, we first calculate the attacked version of the training data using the above definition of adversarial attack, and then use the attacked data to calculate the loss and update the model correspondingly.

For clean training and adversarial training in this paper, we use ``clean training" to minimize the clean loss and use ``adversarial training" to minimize the adversarial loss.

\subsection{Contrastive Learning and Robustness Inheritance}

Different from supervised learning, contrastive learning is an unsupervised learning method and does not need labels/responses. The aim of contrastive learning is to figure out an encoder \( \fen(\cdot) \) so that the similarity between similar pairs (positive pairs) of data is maximized, while the similarity between dissimilar pairs (negative pairs) is minimized.
To theoretically connect contrastive learning with supervised learning, \citet{arora2019theoretical} formalize the semantic similarity using latent classes and prove that minimizing the contrastive loss leads to a representation function that achieves a low average linear classification loss on downstream tasks.
They establish that under certain conditions, the contrastive loss serves as an upper bound of the expected supervised loss, theoretically supporting the effectiveness of contrastive learning in downstream supervised tasks.

Extending from the clean contrastive learning, the aim of adversarial contrastive learning is to figure out an encoder \( \fen(\cdot) \),
and the adversarial contrastive loss \( {L}_{\text{con}} \) is
\begin{equation}
    \Lcon(\fen ) = \mathbb{E}_{q(x)}\left[ \ellcon(\fen(\xadv), \fen(x)) \right],\label{eqn:Lcon}
\end{equation}
Empirically, this expectation is approximated by averaging the loss over \( n \) samples from the dataset:
\begin{equation*}
    \widehat{L}_{\text{con}}(\fen) = \frac{1}{n} \sum_{i=1}^{n} \ellcon(\fen(\xadv_i), \fen(x_i))).
\end{equation*}
The loss $\ellcon$ is defined as
\begin{align*}
      & \ellcon(\fen(\xadv_i), \fen(x_i))                                                                                                                      \\
    = & -\log \frac{\exp(\cossim(\fen(\xadv_i), \fen(x_i))/\tau)}{\sum_{x^{\negative} \in X^{\negative} } \exp(\cossim(\fen(x_i), \fen(x^{\negative}))/\tau)},
\end{align*}
where \( \text{sim}(\cdot, \cdot) \) represents the cosine similarity between two latent vectors, defined as
\begin{equation*}
    \cossim(\mathbf{u}, \mathbf{v}) = \frac{\mathbf{u}^T \mathbf{v}}{\|\mathbf{u}\| \|\mathbf{v}\|}.
\end{equation*}

The above formulation allows for adversarial examples to be integrated into the contrastive loss by maximizing the similarity between clean and adversarial representations while minimizing similarities with negative samples.

To construct the dissimilar set $X^{\negative}$ in the above, we first take $X^{\negative}$ as a set of clean examples $X = \{x_1, \ldots x_M\}$ to calculate the corresponding $\xadv_j$'s as the approximation to form a set $X^{\text{adv}}$, and then take $X^{\negative}=X \bigcup X^{\adv} \backslash \{x_i, \xadv_i\}$ when figuring out the $\xadv_i$ on the numerator of $\ellcon$.

Regarding the robustness inheritance phenomenon in adversarial contrastive learning, the common scenario is to use adversarial training to train a contrastive model and then use clean training to replace the projector of the contrastive model (the last linear layer) to adapt to the downstream task. In this training paradigm, the downstream robustness is significantly better than training the downstream task itself from scratch \citep{kim2020adversarial}.

\section{Robust Data Pre-Processing}

We first present the theoretical analysis in Section \ref{sec:main_theory} and the potential issue in auto-encoder in Section \ref{sec:existing_issue}, and then introduce the considered practical algorithm in Section \ref{sec:robust_pre_process}.

\subsection{Main Theory}\label{sec:main_theory}

To analyze the robustness of the downstream task $T$ with conditional distribution \( q(y \mid x) \), the cross-entropy loss (i.e., $\ellsup$) given the encoder parameter $\theta$ is
\begin{equation}
    \mathbb{E}_{q(x,y)} \left[ \mathbb{E}_{p_\theta(z \mid x)}[-\log \hat{p}_T(y \mid z)] \right],
\end{equation}
and the corresponding adversarial loss is
\begin{equation}
    \mathbb{E}_{q(x,y)} \left[\max_{\xadv \in \mathcal{A}(x)} - \log \mathbb{E}_{p_\theta(z \mid \xadv)}[\hat{p}_T(y \mid z)]\right],
\end{equation}
where $\hat{p}_T$ is the model's output probability of label $y$.
We aim to learn a classifier \( \hat{p}_T(y \mid \fde(\fen(x))) \) that predicts \( y \) based on the robust decoded features \( \fde(\fen(x)) \).

The following theorem describes how the adversarial attack impacts the downstream classification performance, and how it is related to the adversarial contrastive loss:
\begin{theorem}\label{thm:downstream}
    Assume for all \( x \), the encoder \( \fen(x) \) generates a robust latent feature \( z \) such that \( \| \fen(\xadv) - \fen(x) \| \leq \eta_1 \), where \( \eta_1 \) is small, and for all pairs \( (x_1, y_1) \), \( (x_2, y_2) \) with \( y_1 \neq y_2 \), it holds that \( \| \fen(x_1) - \fen(x_2) \| \geq \eta_2 \) and \( \| \fen(x_1) - \fen(x_2^{{\mathrm{adv}}}) \| \geq \eta_2 \), where \( \eta_2 > \eta_1 \) is a larger constant. Additionally, assume that \( -\log \hat{p}_T(y \mid \fde(z)) \leq M \) for all \( z \in \mathcal{Z} \) and \( y \in \mathcal{Y} \). Then, for some constant \( \kappa \),
    \begin{align}
         & \mathbb{E}_{q(x,y)} \left[ \max_{\xadv \in \mathcal{A}(x)} - \log \hat{p}_T(y \mid \fde(\fen(\xadv))) \right]  \notag \\
         & \leq \mathbb{E}_{q(x,y)} \left[ -\log \hat{p}_T(y \mid \fde(\fen(x))) \right]+\kappa  \Lcon(\fen)
    \end{align}
    where $\Lcon(\fen)$ is the robust contrastive loss defined in (\ref{eqn:Lcon}).
\end{theorem}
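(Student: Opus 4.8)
The plan is to pass from the adversarial objective to the clean objective by a pointwise add-and-subtract, and then to charge the remaining discrepancy to the contrastive loss $\Lcon(\fen)$. Fix a pair $(x,y)$, write $z=\fen(x)$, and let $\xadv$ denote the inner maximizer so that $z^{\adv}=\fen(\xadv)$. First I would decompose
\begin{align*}
    -\log \hat{p}_T(y \mid \fde(z^{\adv}))
     = -\log \hat{p}_T(y \mid \fde(z))
       + \Delta(x,y),
\end{align*}
where $\Delta(x,y)=\log\bigl(\hat{p}_T(y \mid \fde(z))\big/\hat{p}_T(y \mid \fde(z^{\adv}))\bigr)$ is the per-sample adversarial gap. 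Taking $\mathbb{E}_{q(x,y)}$ of both sides, the first term is exactly the clean loss appearing on the right-hand side of the theorem, so the whole statement reduces to establishing $\mathbb{E}_{q(x,y)}[\Delta(x,y)] \le \kappa\,\Lcon(\fen)$.

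Second, I would bound $\Delta(x,y)$ pointwise using the geometric hypotheses. The robustness bound $\|z^{\adv}-z\|\le\eta_1$, together with the cross-class separation $\|\fen(x_1)-\fen(x_2)\|\ge\eta_2$ and $\|\fen(x_1)-\fen(x_2^{\adv})\|\ge\eta_2$ and the margin $\eta_2>\eta_1$, forces the perturbed feature $z^{\adv}$ to remain strictly nearer to the clean representation of its own label than to any representation of a competing label. This is precisely the configuration in which the contrastive score $\cossim(z^{\adv},z)$ dominates the negative scores $\cossim(z,x^{\negative})$ entering $\ellcon$, so the per-sample contrastive loss is small exactly when the feature geometry that keeps $\Delta(x,y)$ small holds. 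Concretely, I would split each sample according to whether the attack drives the decoded feature across a decision region: on the ``aligned'' part, continuity of $\fde$ and of the softmax head bounds $\Delta(x,y)$ by a multiple of the displacement, which the logsumexp form of $\ellcon$ lower-bounds; on the residual part I would invoke the uniform bound $-\log\hat{p}_T\le M$ to cap $\Delta(x,y)\le M$ and charge its frequency to $\ellcon$ via a Markov-type inequality, since a large misalignment or a shrunken cross-class margin necessarily inflates $\ellcon$. Combining the two contributions yields $\Delta(x,y)\le \kappa\,\ellcon(\fen(\xadv),\fen(x))$ for a constant $\kappa$ depending on $M$, the temperature $\tau$, the gap $\eta_2-\eta_1$, and the Lipschitz constants of $\fde$ and the classifier head.

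Third, I would take expectations and invoke the definition of $\Lcon(\fen)$ in (\ref{eqn:Lcon}): averaging the pointwise inequality over $q(x,y)$ turns $\mathbb{E}[\ellcon(\fen(\xadv),\fen(x))]$ into $\Lcon(\fen)$, giving $\mathbb{E}_{q(x,y)}[\Delta(x,y)]\le\kappa\,\Lcon(\fen)$ and hence the theorem. The main obstacle I expect is the middle step: the cross-entropy gap $\Delta(x,y)$ is expressed through the classifier's output probabilities, whereas $\ellcon$ is built from cosine similarities at temperature $\tau$, so translating one into the other requires carefully relating normalized inner products to the displacement $\|z^{\adv}-z\|$, exploiting the margin $\eta_2-\eta_1$ to guarantee that the correct logit still dominates, and tracking how these pieces collapse into the single constant $\kappa$. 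Making this comparison clean — rather than the mechanical decomposition or the final expectation — is where the real work lies.
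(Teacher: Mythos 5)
Your overall skeleton --- add-and-subtract to isolate the clean loss, then charge the per-sample gap $\Delta(x,y)$ to the contrastive loss --- matches the paper's, which phrases the decomposition through Jensen's inequality and the Kantorovich--Rubinstein duality but, after collapsing the latent distributions to Dirac masses, arrives at exactly your pointwise statement: the gap is at most $M_C\,\|\fen(\xadv)-\fen(x)\|$, where $M_C$ is a Lipschitz constant for the map $z \mapsto -\log\hat{p}_T(y\mid\fde(z))$, obtained by composing a Lipschitz bound on the decoder (derived from the reconstruction loss together with a lower gradient bound on the encoder) with the uniform bound $-\log\hat{p}_T\leq M$, which keeps $\hat{p}_T\geq e^{-M}$ and hence keeps the derivative of $-\log$ bounded. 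Your ``aligned part'' is essentially this step. The genuine gap is in how you then pass from the displacement to $\ellcon$. The paper uses no case split and no frequency argument; it uses the identity, valid for normalized features,
\begin{equation*}
\|\fen(x)-\fen(\xadv)\|_2 \;=\; \sqrt{2}\,\sqrt{1-\cossim(\fen(x),\fen(\xadv))},
\end{equation*}
followed by a chain of scalar inequalities $\sqrt{1-t}\leq C_{\mathrm{sqrt}}(1-t)\leq C_{\mathrm{sqrt}}\,e^{-t}\leq C_{\mathrm{sqrt}}C_{\log}\log(1+e^{-t})$, and finally absorbs the negative-sample denominator $\sum_{x^{\negative}}\exp(\cossim(\fen(x),\fen(x^{\negative}))/\tau)$ of $\ellcon$ at the cost of one more constant $C_{\mathrm{M}}$ (using that this sum exceeds $1$ once there are moderately many negatives). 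Notably, the separation hypotheses $\eta_2>\eta_1$ enter only to certify that the identity coupling --- matching each $z_i$ with its own $z_i'$ --- is the cheap transport plan in the Wasserstein bound; they are not used to control decision-boundary crossings.

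Your residual branch is where the proposal would actually fail as written: $\ellcon$ controls the magnitude of $1-\cossim(\fen(x),\fen(\xadv))$, not the probability of a boundary-crossing event, so ``charging the frequency to $\ellcon$ via a Markov-type inequality'' has no evident justification --- a small contrastive loss does not by itself bound how often the decoded feature lands in a wrong decision region unless you first convert it into a displacement bound, at which point you are back on the aligned branch and the case split is superfluous. The fix is to drop the split entirely: establish the global Lipschitz property of $z\mapsto-\log\hat{p}_T(y\mid\fde(z))$ (this is where the boundedness assumption $-\log\hat{p}_T\leq M$ is actually consumed), so that the entire gap is always of the ``aligned'' type, and then carry out the cosine-similarity-to-InfoNCE chain above to land on $\kappa\,\Lcon(\fen)$.
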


The proof of Theorem \ref{thm:downstream} can be found in Appendix \ref{sec:appendix:proof}. Theorem~\ref{thm:downstream} illustrates how the downstream adversarial loss can be connected to the adversarial contrastive loss: it is upper bounded by the downstream clean loss plus the adversarial contrastive loss. This implies that, if a data pre-processor can achieve a small adversarial contrastive loss, it can also lead to the robustness of the downstream classification.

\subsection{Potential Issue in Adversarial Reconstruction Loss}\label{sec:existing_issue}

To highlight the importance of the feature robustness (i.e., a small adversarial contrastive loss), when there is no data pre-processing, or we only attack on the reconstruction loss to train the auto-encoder (the benchmark \arae), we further show that these scenarios can result in a poor downstream robustness. The following proposition provides an example:

\begin{proposition}\label{pro:adv loss}
    There exists a data generation model $(x, y)$, an auto-encoder $(\fen, \fde)$, and a classifier $(\fpre, \flast)$ such that if the  clean reconstruction loss
    \(\mathbb{E}_{x} \left[ \| \fde(\fen(x)) - x \|^2 \right]
    \) and the  adversarial reconstruction loss \(\mathbb{E}_{x} \left[ \| \fde(\fen(\xadv)) - x \|^2 \right]\) for adversarial examples $\xadv$ within a perturbation budget $\epsilon$
    are 0 and $O(1/n)$ respectively, then the adversarial classification loss satisfies
        {
            \small
            \begin{align}
                 & \mathbb{E}_{q(x,y)} \left[ \max_{\xadv \in \mathcal{A}(x)} -\log \hat{p}_T\left(y \mid \flast(\fpre(\fde(\fen(\xadv))))\right) \right] \notag \\
                 & \geq \Gamma> \mathbb{E}_{q(x,y)} \left[ -\log \hat{p}_T\left(y \mid \flast(\fpre(x))\right) \right] = O(\delta),
            \end{align}
        }
    where $\Gamma=O(-\log(\delta))$ is a large value, and $\delta=o(1/n)$ associated with the data distribution.
\end{proposition}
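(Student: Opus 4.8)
The plan is to prove \cref{pro:adv loss} by an explicit construction (a witness), since the statement is purely existential. The guiding intuition is that the reconstruction objective is measured in \emph{input space}, whereas the downstream prediction depends on \emph{feature space} through the fixed, unaccessed foundation model $\fpre$. If $\fpre$ strongly amplifies a label-irrelevant direction, then a perturbation that is negligible for reconstruction can be catastrophic once pushed through $\fpre$ and $\flast$. The construction simply engineers this gap.

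First I would fix the data model. Let $x=(s,t)$ with the label determined solely by $s$ (say $y=\mathbb{1}[s>0]$), clean data supported on $t=0$ with a margin $|s|\ge\gamma$, so that $t$ is label-irrelevant on the clean distribution. Take the auto-encoder to be the identity (or any linear $(\fen,\fde)$ that reconstructs the clean support exactly), so the clean reconstruction loss is exactly $0$ and, for every $\bar x\in\mathcal{A}(x)$, the reconstruction error is bounded by the perturbation size. Choosing the budget $\epsilon=\Theta(1/\sqrt{n})$ then forces the adversarial reconstruction loss to be $O(\epsilon^2)=O(1/n)$ \emph{uniformly} over the threat model, which in particular covers whichever adversarial example is used; this is why the two reconstruction hypotheses hold simultaneously with no mismatch.

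Next I would build the foundation model and head to be confident yet brittle. Let $\fpre$ pass $s$ through but scale the irrelevant coordinate by a large factor, so the decision feature is $g=s+Ct$, and let $\flast$ apply a sharp logistic head $\hat{p}_T(1\mid g)=\sigma(\beta g)$. Tying the sharpness to $\delta$ via $\beta=\Theta(-\log\delta/\gamma)$ makes the clean loss $-\log\sigma(\beta\gamma)=O(\delta)$, giving the right-hand side $O(\delta)$ with $\delta=o(1/n)$ a free parameter (e.g.\ $\delta=1/n^{2}$). For the attack, moving $t$ by $\epsilon$ in the sign opposing the label yields $g=-\gamma+C\epsilon$ for a $y=0$ point and $g=\gamma-C\epsilon$ for a $y=1$ point; choosing $C\epsilon>2\gamma$ (possible since $C$ and $\epsilon$ are independent design choices, e.g.\ $C=\Theta(\gamma\sqrt{n})$) flips the sign of $g$ for \emph{both} classes, so the head confidently predicts the wrong label and the per-point cross-entropy of the true label is $\Theta(\beta\gamma)=\Theta(-\log\delta)=\Gamma$. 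Taking expectations gives adversarial classification loss $\ge\Gamma=O(-\log\delta)$, and since $\delta$ is small we also get $\Gamma>O(\delta)$, as required.

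The main obstacle is the apparent tension in satisfying both hypotheses under a \emph{single} budget $\epsilon$: the same perturbation must be invisible to the reconstruction objective yet devastating to the classifier. The resolution, and the crux of the argument, is to decouple the two scales — the reconstruction objective only ever sees the raw $\epsilon$ (hence $O(1/n)$), while the foundation model multiplies the label-irrelevant component by $C$, so the effective perturbation at the decision boundary is $C\epsilon\gg\gamma$. The remaining work is to verify that $\epsilon=\Theta(1/\sqrt{n})$, $C=\Theta(\gamma\sqrt{n})$ and $\beta=\Theta(-\log\delta/\gamma)$ are mutually consistent and that the identity auto-encoder indeed gives zero clean and $O(1/n)$ adversarial reconstruction loss; this bookkeeping is routine once the amplification mechanism is in place.
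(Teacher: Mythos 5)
Your construction is correct, but it takes a genuinely different route from the paper's. The paper also argues by explicit witness with an identity auto-encoder and $\epsilon=O(1/\sqrt{n})$, but it makes the classifier brittle \emph{by fiat}: the data are $N$ discrete points pairwise separated by at least $2\epsilon$, $\fpre$ is the identity, and $\flast$ is a piecewise rule that assigns probability $1-\delta$ to the true label exactly at each $x_i$ and probability $O(\delta)$ to it everywhere else in the (disjoint) $\epsilon$-balls; the separation is what makes this assignment well defined. You instead realize the brittleness through a concrete, smooth architecture --- a label-irrelevant coordinate $t$ that the reconstruction objective sees at scale $\epsilon$ but that $\fpre$ amplifies by $C$ before a sharp logistic head --- so the same perturbation is $O(1/n)$ in input space yet flips the decision feature by $C\epsilon\gg\gamma$. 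Your version buys a more instructive mechanism (it exhibits \emph{why} reconstruction fidelity and downstream robustness can decouple, via an input-space/feature-space scale mismatch) and a classifier that is an actual continuous model rather than an arbitrary pointwise assignment; the paper's version buys brevity and avoids any architectural bookkeeping. The only gap in yours is minor: as written, the flip argument evaluates $g$ at $s=\pm\gamma$, so you must bound the support of $s$ (e.g.\ take $|s|$ in a fixed interval $[\gamma, c\gamma]$ and choose $C\epsilon>(1+c)\gamma$) so that a single amplification constant flips \emph{every} point in the support, not just those sitting exactly on the margin; with that stipulation the bookkeeping you defer goes through.
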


Proposition \ref{pro:adv loss} is a possible scenario where the auto-encoder is trained to achieve a small adversarial reconstruction loss while the downstream classification task has a poor robustness. It constructs a discrete data distribution with well-separated points and a classifier that assigns high probability to clean inputs and low probability to perturbed inputs. The detailed proof can be found in Appendix \ref{sec:appendix:proof}.

Proposition \ref{pro:adv loss} underscores the need for a better approach for robust auto-encoder.
Further inspired by Theorem \ref{thm:downstream}, by integrating adversarial contrastive loss when training the pre-processors, the robustness issue in auto-encoder can be mitigated. We present the algorithm as follows.

\subsection{Practical Algorithm}\label{sec:robust_pre_process}

To design the robust data pre-processor, we leverage adversarial contrastive learning when training the robust auto-encoder.
Unlike the original contrastive learning framework, we assume that the foundation model already exists and try to avoid accessing it to reduce computational costs.
In this case, since the foundation model receives images as inputs, we need to develop a model to pre-process the images before feeding them into the foundation model. Consequently, we leverage contrastive learning to train an auto-encoder, the latter of which is supposed to output an image. The proposed algorithm is summarized in Algorithm \ref{alg:contra}, and the graphical illustration can be found in the above Figure~\ref{fig:adv_contra}.

\begin{algorithm*}
    \caption{\textbf{C}ontrastive \textbf{Ro}bust \textbf{P}reprocessing \textbf{D}efense}\label{alg:contra}
    \begin{algorithmic}[1]
        \STATE Use pre-training dataset $\mathcal{S}_{pretrain}$ to train a neural network $f_{last}(f_{pre}(\cdot))$.
        \STATE Use the downstream dataset $\mathcal{S}_{down}$ to train a robust auto-encoder via minimizing the loss
        \begin{eqnarray*}
            \min_{f_{en},f_{de}}\sum_{x\in\mathcal{S}_{down}} \|f_{de}(f_{en}(x))-x\|^2+\lambda \sup_{x^{adv}\in\mathcal{A}(x)} L_{con}(f_{en}(x^{adv}),f_{en}(x)),
        \end{eqnarray*}
        \STATE Use the labeled downstream dataset $\mathcal{S}_{label}$ to adjust the last layers $f_{last}$ for the downstream task:
        $$\min_{f_{last}} \sum_{(x,y)\in \mathcal{S}_{label}} L_{sup}( f_{last}(f_{pre}(f_{de}(f_{en}(x)))) , y).$$
    \end{algorithmic}
\end{algorithm*}

There are several components in whole framework:

First, for the robust auto-encoder, assume we have a robust auto-encoder defined by an encoder \( f_{en}(x) \) and a decoder \( f_{de}(z) \), where \( z = f_{en}(x) \) represents the latent feature of the input \( x \). This auto-encoder is trained using a combination of reconstruction loss and adversarial contrastive loss. The reconstruction loss, $\|f_{de}(f_{en}(x))-x\|^2$ ensures that the decoder \( f_{de} \) can accurately reconstruct the original input from the encoded features. The adversarial contrastive loss, $L_{\text{con}}(f_{en}(x^{adv}),f_{en}(x))$  promotes robustness of the features \( f_{en}(x) \) against adversarial perturbations \( x^{adv} \).

Second, in addition to the robust auto-encoder, since the output format of the foundation model might be different from the downstream task, we further train a new last layer on top of the foundation model. Recall that the output of the robust auto-encoder is $f_{de}(f_{en}(x))$, the output of the foundation model then becomes $f_{pre}(f_{de}(f_{en}(x)))$. After passing this to the last linear layer, we get the output as $f_{last}(f_{pre}(f_{de}(f_{en}(x))))$, and we minimize the downstream loss $L_{sup}$.

To connect with Theorem \ref{thm:downstream}, In \contrastive, since we leverage adversarial contrastive loss in training the auto-encoder, its value is well controlled. In contrast, similar to Proposition \ref{pro:adv loss}, when there is no data pre-processing, or we only attack on the reconstruction loss to train the auto-encoder, there is no expectation on how the adversarial contrastive loss behaves in those models, and the robustness can be poor.

\section{Experiments}
In the experiments, we aim to demonstrate the effectiveness of the proposed robust pre-processor method. The expected result is that, \contrastive\ leads to robustness much stronger than using a non-robust-contrastive-learning-based data pre-processor, highlighting the importance of feature robustness (a small adversarial training loss). In addition, since the robust pre-processor is a small model, the final adversarial robustness of the downstream task may be a bit worse than using adversarial training to fine-tune the foundation model. However, the computational cost of \contrastive is much smaller than fine-tuning a foundation model using adversarial training.

\subsection{Experimental Setups}

\paragraph{Datasets}
We conduct experiments on \cifarten, \cifarhund\ \citep{krizhevsky2009learning}, \svhn\ \citep{yuval2011reading} and \imagenette\ \citep{imagenette} (a subset of $10$ classes from \imagenet\ \citep{imagenet09}).
We also consider a special variation of \cifarten, dubbed \cifartwo, that subsets the first two classes of \cifarten\ (\textit{airplane} and \textit{automobile}) for computationally intensive robust training experiments.
We use the original dataset split to train and evaluate our models.
We briefly describe these datasets below: \textbf{\cifarten}: This dataset consists of $60\,000$ -- $32 \times 32$ color images across $10$ categories. \textbf{\cifartwo}: A binary subset of \cifarten, containing only the first two classes (airplane and automobile). \textbf{\cifarhund}: This dataset contains $60\,000$ -- $32 \times 32$ color images with $100$ categories. \textbf{\svhn}: This dataset contains $630\,420$ -- $32 \times 32$ color images of digits ($0$-$9$) cropped from house numbers in Google Street View images. \textbf{\imagenette}: A subset of $13\,000$ images of $10$ classes from the ImageNet dataset \citep{imagenet09}.
\vspace{-0.1in}
\paragraph{Pre-processors}
Following \citet{zhou2023eliminating}, we use a variant of \vitmae\ architecture that utilizes $50\%$ deterministic masking for consistent reconstruction.
The detailed configuration is postponed to Appendix~\ref{app-sec:experiment-setup}.

In addition to \vanilla, which utilizes an auto-encoder trained with reconstruction loss only, we also include two other pre-processor baselines, \arae\ \citep{salehi2021arae}
and pre-trained \vae\ \citep{kingma2014autoencoding} from \huggingface\ Diffusers \citep{platen2022diffusers}.
For \arae, although it utilizes adversarial training to train an auto-encoder,
the main purpose is to improve the output quality of the auto-encoder rather than distinguishing similar and dissimilar data.
We also include \vae\ because its denoising capabilities may also mitigate adversarial attacks.
Finally, \none\ represents the case without any pre-processor and is the most vulnerable baseline.

Following observations by \citet{chen2020simple}, though \vitmae\ naturally outputs latent embeddings,
it is ineffective for \contrastive\ or \arae\ if we naively use this embedding to align the latents.
As a result, we use pooling and a two-layered projector to reduce the latent to a $128$-dimension vector for training \contrastive\ and \arae.

Before performing the downstream task,
we first train the \contrastive, \vanilla, and \arae\ using the downstream dataset to obtain pre-processors.
Some sample image reconstructions by \arae\ and \contrastive\ are demonstrated in Figure~\ref{fig:auto-encoder-recon}.

\begin{figure}[!ht]
    \centering
    \small
    \begin{tabular}{l@{\hspace{3pt}}c@{\hspace{3pt}}c@{\hspace{3pt}}c@{\hspace{3pt}}c}
                                                                                              & \cifarten & \cifarhund & \svhn & \imagenette \\
        Orig                                                                                  &
        \includegraphics[width=0.195\linewidth]{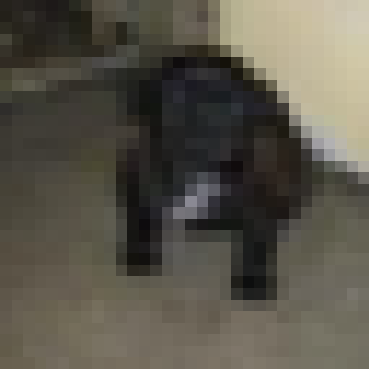}          &
        \includegraphics[width=0.195\linewidth]{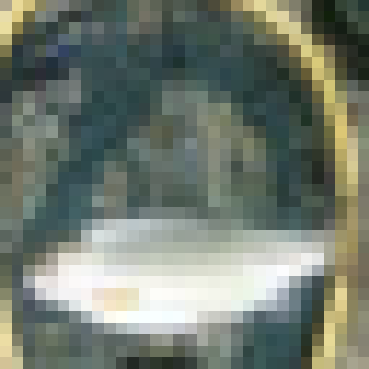}         &
        \includegraphics[width=0.195\linewidth]{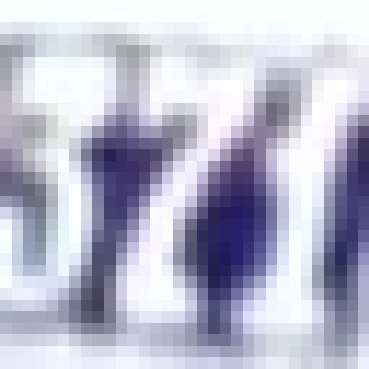}              &
        \includegraphics[width=0.195\linewidth]{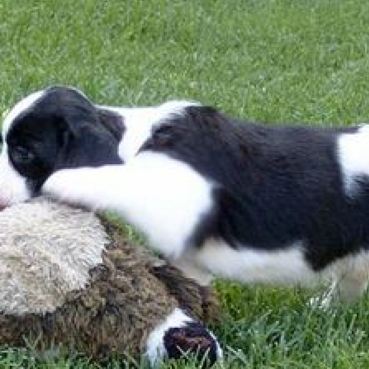}                                                       \\
        \arae                                                                                 &
        \includegraphics[width=0.195\linewidth]{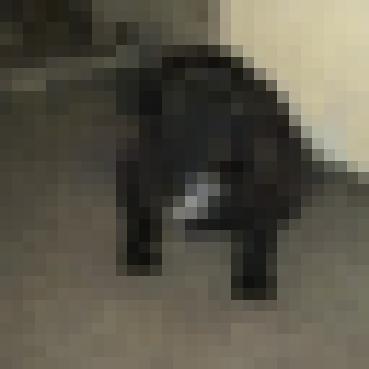}  &
        \includegraphics[width=0.195\linewidth]{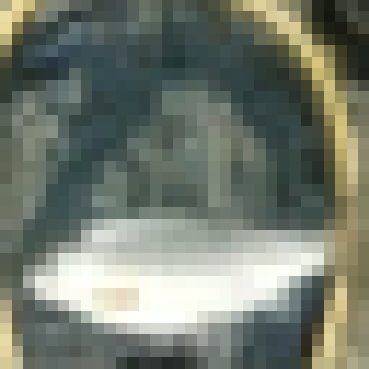} &
        \includegraphics[width=0.195\linewidth]{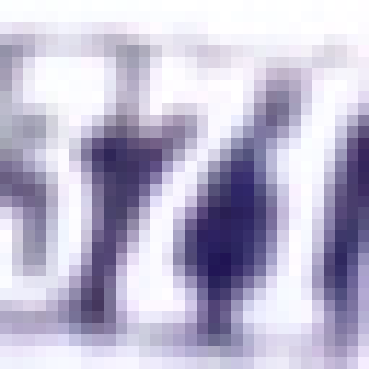}      &
        \includegraphics[width=0.195\linewidth]{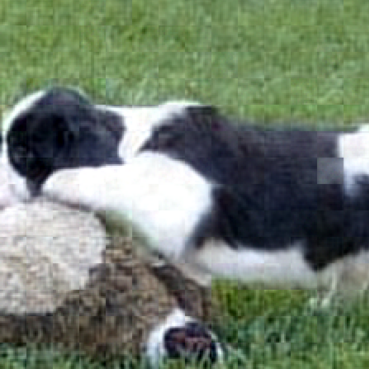}                                               \\
        \contrastive                                                                          &
        \includegraphics[width=0.195\linewidth]{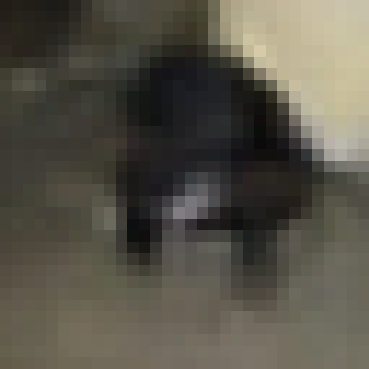}       &
        \includegraphics[width=0.195\linewidth]{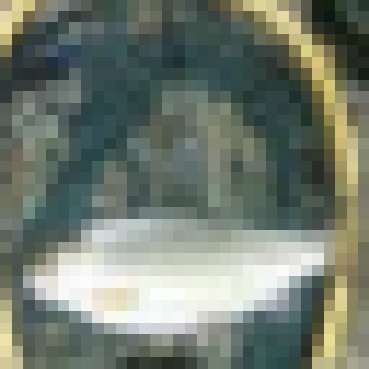}      &
        \includegraphics[width=0.195\linewidth]{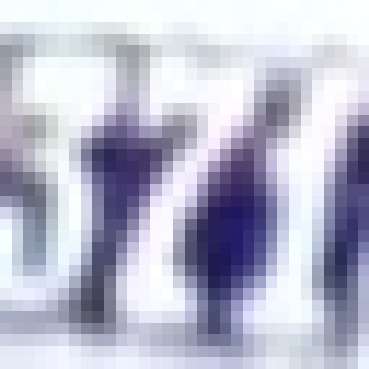}           &
        \includegraphics[width=0.195\linewidth]{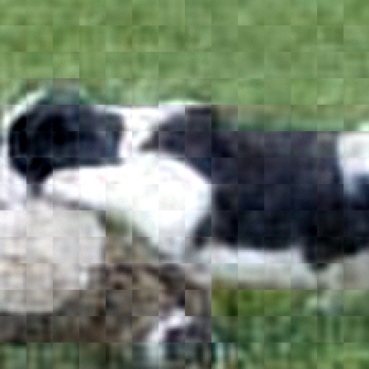}                                                    \\
    \end{tabular}
    \caption{
        Sample image reconstructions of each dataset.
        Top row: original images, middle row: \arae\ reconstructions, bottom row: \contrastive\ reconstructions.
        Columns correspond to different datasets.
        \arae\ reconstructions are sharper as expected, while \contrastive\ reconstructions are purified and more
        robust for downstream tasks.
    }
    \label{fig:auto-encoder-recon}
\end{figure}

\begin{table*}
    \centering
    \small
    \begin{tabular}{l@{}c||c@{\hskip 6pt}c@{\hskip 6pt}c@{\hskip 6pt}c|c@{\hskip 6pt}c@{\hskip 6pt}c@{\hskip 6pt}c}
        \toprule
        Pre-processor               & Fine-tuning & \multicolumn{4}{c|}{\textbf{Clean}} & \multicolumn{4}{c}{\textbf{Robust}}                                                                                                                                                                                                                       \\
                                    & foundation  & Natural                             & PGD-10                               & PGD-20                               & AutoAttack*                           & Natural        & PGD-10                               & PGD-20                               & AutoAttack*                          \\
        \midrule
        \midrule
        \none                       &             & \textbf{99.54}                      & 32.78{\scriptsize$\pm$2.15}          & 19.14{\scriptsize$\pm$1.78}          & \phantom{0}2.11{\scriptsize$\pm$0.63} & \textbf{97.28} & 73.46{\scriptsize$\pm$1.96}          & 65.76{\scriptsize$\pm$2.11}          & 45.28{\scriptsize$\pm$2.14}          \\
        \quad + LoRA                &             & 98.44                               & 43.11{\scriptsize$\pm$2.20}          & 33.55{\scriptsize$\pm$2.09}          & \phantom{0}8.38{\scriptsize$\pm$1.23} & 97.54          & 65.50{\scriptsize$\pm$2.08}          & 55.96{\scriptsize$\pm$2.16}          & 28.18{\scriptsize$\pm$2.04}          \\
        \vae                        &             & 94.71                               & 57.23{\scriptsize$\pm$2.18}          & 46.45{\scriptsize$\pm$2.20}          & 34.78{\scriptsize$\pm$2.04}           & 92.80          & 71.47{\scriptsize$\pm$2.03}          & 62.54{\scriptsize$\pm$2.14}          & 56.03{\scriptsize$\pm$2.15}          \\
        \vanilla                    &             & 99.31                               & 34.10{\scriptsize$\pm$2.03}          & 22.09{\scriptsize$\pm$1.77}          & \phantom{0}7.87{\scriptsize$\pm$1.20} & 97.05          & 73.92{\scriptsize$\pm$1.96}          & 66.84{\scriptsize$\pm$2.07}          & 55.45{\scriptsize$\pm$2.11}          \\
        \arae                       &             & 99.20                               & 46.84{\scriptsize$\pm$2.17}          & 36.22{\scriptsize$\pm$2.08}          & 70.62{\scriptsize$\pm$1.98}           & 97.29          & 78.83{\scriptsize$\pm$1.79}          & 73.16{\scriptsize$\pm$1.93}          & 81.78{\scriptsize$\pm$1.67}          \\
        \midrule
        \contrastive, $\lambda=0.1$ &             & 99.00                               & 45.57{\scriptsize$\pm$2.12}          & 34.65{\scriptsize$\pm$1.94}          & 19.48{\scriptsize$\pm$1.76}           & 97.30          & 77.07{\scriptsize$\pm$1.85}          & 71.10{\scriptsize$\pm$1.99}          & 63.90{\scriptsize$\pm$2.13}          \\
        \contrastive, $\lambda=1$   &             & 96.81                               & 82.04{\scriptsize$\pm$1.69}          & 77.42{\scriptsize$\pm$1.83}          & 87.06{\scriptsize$\pm$1.48}           & 96.31          & 88.00{\scriptsize$\pm$1.45}          & 85.64{\scriptsize$\pm$1.55}          & 91.80{\scriptsize$\pm$1.24}          \\
        \contrastive, $\lambda=10$  &             & 96.66                               & \textbf{84.60{\scriptsize$\pm$1.60}} & \textbf{80.28{\scriptsize$\pm$1.73}} & \textbf{90.56{\scriptsize$\pm$1.30}}  & 96.41          & \textbf{89.04{\scriptsize$\pm$1.38}} & \textbf{87.03{\scriptsize$\pm$1.44}} & \textbf{93.39{\scriptsize$\pm$1.07}} \\
        \midrule
        \none                       & \checkmark  & 99.40                               & 70.13{\scriptsize$\pm$2.02}          & 64.45{\scriptsize$\pm$2.17}          & 21.02{\scriptsize$\pm$1.83}           & 99.30          & 95.96{\scriptsize$\pm$0.87}          & 95.39{\scriptsize$\pm$0.92}          & 96.41{\scriptsize$\pm$0.83}          \\
        \bottomrule
    \end{tabular}
    \caption{
        Natural and robust performance comparison of different pre-processors on \cifartwo.
        The results are presented for both clean and robust trained (Robust) models,
        evaluated under clean conditions and against adversarial attacks (PGD-10, PGD-20, and AutoAttack).
        The best values in each column, excluding full-rank fine-tuned models, are highlighted.
        \contrastive\ achieves the best robust accuracies for both PGD and AutoAttack, beating the runner-up by $11\%$ to $27\%$.\\
        \footnotesize * We use APGD-CE, FAB, Square Attack for CIFAR-2,
        because the other attacks are not compatible with the binary classification task (require at least 4 classes).
    }
    \label{tab:cifartwo-all-results}
\end{table*}
\vspace{-0.1in}
\paragraph{Foundation Model and Downstream Task}
For all experiments, we use the \huggingface\ Transformers \citep{wolf2020transformers} to load a
pre-trained large \vitmae\ \citep{he2021masked} as the foundation model.
Due to the different characteristics of the datasets, we consider different ways of using the foundation model, and postpone the details to Appendix \ref{app-sec:experiment-setup}. Finally, to perform classification for all the scenarios above, we instantiate and train a linear layer that maps features of the foundation model to labels.

\vspace{-0.1in}
\paragraph{Attacks}
We evaluate the robustness of the system using a true white-box attack scenario and PGD with two settings (PGD-10 and PGD-20). Adversarial examples are dynamically generated during evaluation by calculating gradients through the entire pipeline, which includes the robust auto-encoder (encoder and decoder), the foundation model, and the linear classifier. Specifically, we use an \( \mathcal{L}_\infty \) attack with a maximum perturbation strength \( \epsilon = 8/255 \) for CIFAR-2, CIFAR-10, and CIFAR-100, and \( \epsilon = 4/255 \) for Imagenette, consistent with the training setup. Unlike training, where FGSM is applied, evaluation employs iterative PGD attacks to generate more challenging adversarial samples. For PGD-10, the attack is iterated for 10 steps with each step size limited to \( \epsilon / 5 \). Similarly, PGD-20 performs 20 iterations with each step size limited to \( \epsilon / 10 \).

\subsection{Binary Classification}
Since \cifartwo\ contains only $10\,000$ training samples and the computation is much less expensive than the other datasets,
we benchmark the performance of all methods on \cifartwo\ with both clean and adversarial training.
This includes the proposed method \contrastive, other pre-processors (\vanilla, \arae, \vae), no pre-processor (\none),
and a benchmark of fine-tuning the whole foundation model using clean and adversarial training.
Among all settings, robust fine-tuning is far more expensive than clean training, up to 10 times slower.
The slow convergence rate of robust fine-tuning makes it the slowest approach.

The results are summarized in Table~\ref{tab:cifartwo-all-results}.
In general, \contrastive\ outperforms the other methods when trained with both clean and adversarial training.
The following presents some details about the experiment results.

First, as mentioned in the experiment setup above, we provide a benchmark of robust fine-tuning for the foundation model.
From Table~\ref{tab:cifartwo-all-results}, it achieves a robust test accuracy of around 96\% for both PGD and AutoAttack,
which is the highest compared to others.

Second, comparing (\none, Clean) and the other settings with a pre-processor, without a pre-processor,
all the small perturbations/noise contained in the input are fed into the foundation model,
which leads to the worst robust test accuracy of only 32.78\% for PGD-10, 19.14\% for PGD-20 and 2.11\% for AutoAttack.
Consistent with our theorem, the na\"ive approach to adversarial auto-encoder, \vanilla,
only provides a slight improvement to robust accuracy compared to \none.
Besides, comparing (\none, Clean) with (\none, Robust), when training the last linear layer using adversarial training,
the adversarial test accuracy can be significantly improved to 73.46\%, 65.76\% and 45.28\% for PGD-10, PGD-20 and AutoAttack.
However, these accuracy values are still much lower than the expensive, robust fine-tuning setting of around 96\%.

For the proposed method \contrastive,
the weight for the adversarial contrastive loss ($\lambda$) controls the clean and robust accuracy trade-off as expected:
The higher the $\lambda$ is, the more we favor robust accuracy over clean accuracy.
Recall that the no pre-processor (\none) robust training with and without fine-tuning achieves around 95\% and 70\% robust accuracy, respectively.
When taking the best $\lambda=10$ and training the linear classification layer using clean training,
robust accuracy of \contrastive\ outperforms robust training without fine-tuning by upto 48\% with less than 1\% degradation in clean accuracy.
When the linear layer of \contrastive\ ($\lambda=10$) is trained  with robust training,
the robust accuracy can be further enhanced to 89.04\% for PGD-10, 87.03\% for PGD-20 and 93.39\% for AutoAttack,
Lastly, our proposed method significantly improves the robustness of the whole system by merely compromising around 3\% in the natural test accuracy.

On the other hand, for the other pre-processors, to compare \vanilla\ with \vae\ and \arae,
when the last linear layer is trained using clean training,
the adversarial test accuracy can be significantly enhanced under both PGD and AutoAttack.
Although these two methods are not specifically designed to remove adversarial attacks,
since they still perform denoising, they are expected to be robust to adversarial attacks to some extent.
This aligns with the literature that random smoothing can defend against adversarial attack~\citep{cao2017mitigating,liu2018towards,cohen2019certified}.
However, the robustness is not as strong as models trained from adversarial training,
i.e., the adversarial test accuracies are much lower than 89.04\% and 87.03\% achieved by our proposed method \contrastive.
We also try LoRA as a surrogate for faster fine-tuning, but the performance is as similar as \none\ only.

\subsection{Multi-class Experiments}
\label{sec:multi-class}

\begin{table}[!htbp]
    \centering
    \small
    \begin{minipage}{\linewidth}
        \centering
        \begin{tabular}{l|c@{\hskip 5pt}c@{\hskip 5pt}c@{\hskip 5pt}c}
            \toprule
            Pre-processor & Natural        & PGD-10                                & PGD-20                                & AutoAttack                            \\
            \midrule
            \midrule
            \none         & \textbf{91.56} & \phantom{0}2.83{\scriptsize$\pm$0.33} & \phantom{0}1.19{\scriptsize$\pm$0.21} & \phantom{0}5.08{\scriptsize$\pm$0.44} \\
            \quad + LoRA  & 88.10          & \phantom{0}3.90{\scriptsize$\pm$0.37} & \phantom{0}1.78{\scriptsize$\pm$0.26} & ---                                   \\
            \vae          & 66.62          & 14.10{\scriptsize$\pm$0.68}           & \phantom{0}7.70{\scriptsize$\pm$0.52} & 15.27{\scriptsize$\pm$0.70}           \\
            \vanilla      & 90.93          & 10.04{\scriptsize$\pm$0.61}           & \phantom{0}4.95{\scriptsize$\pm$0.42} & \phantom{0}6.93{\scriptsize$\pm$0.51} \\
            \arae         & 87.96          & 18.38{\scriptsize$\pm$0.76}           & \phantom{0}7.73{\scriptsize$\pm$0.51} & 13.44{\scriptsize$\pm$0.65}           \\
            \midrule
            \contrastive  & 79.31          & \textbf{47.99{\scriptsize$\pm$0.98}}  & \textbf{40.31{\scriptsize$\pm$0.99}}  & \textbf{66.05{\scriptsize$\pm$0.95}}  \\
            \bottomrule
        \end{tabular}
        \subcaption{\cifarten}
    \end{minipage}%

    \vspace{.2cm}

    \begin{minipage}{\linewidth}
        \centering
        \begin{tabular}{l|c@{\hskip 5pt}c@{\hskip 5pt}c@{\hskip 5pt}c}
            \toprule
            Pre-processor & Natural        & PGD-10                               & PGD-20                               & AutoAttack                            \\
            \midrule
            \midrule
            \none         & 96.89          & 30.97{\scriptsize$\pm$1.44}          & 21.57{\scriptsize$\pm$1.26}          & \phantom{0}1.69{\scriptsize$\pm$0.42} \\
            \quad + LoRA  & \textbf{96.97} & 33.28{\scriptsize$\pm$1.47}          & 21.43{\scriptsize$\pm$1.29}          & ---                                   \\
            \vae          & 96.57          & \textbf{59.91{\scriptsize$\pm$1.51}} & 45.31{\scriptsize$\pm$1.57}          & 13.03{\scriptsize$\pm$1.10}           \\
            \vanilla      & 92.30          & 54.55{\scriptsize$\pm$1.52}          & 47.88{\scriptsize$\pm$1.58}          & 30.62{\scriptsize$\pm$1.45}           \\
            \arae         & 93.32          & 57.32{\scriptsize$\pm$1.55}          & 53.69{\scriptsize$\pm$1.56}          & 34.41{\scriptsize$\pm$1.46}           \\
            \midrule
            \contrastive  & 83.84          & 58.15{\scriptsize$\pm$1.59}          & \textbf{54.80{\scriptsize$\pm$1.49}} & \textbf{60.68{\scriptsize$\pm$1.54}}  \\
            \bottomrule
        \end{tabular}
        \subcaption{\imagenette}
    \end{minipage}

    \vspace{.2cm}

    \begin{minipage}{\linewidth}
        \centering
        \begin{tabular}{l|cccc}
            \toprule
            Pre-processor & Natural        & PGD-10                               & PGD-20                                \\
            \midrule
            \midrule
            \none         & \textbf{99.85} & 33.60{\scriptsize$\pm$0.35}          & 30.33{\scriptsize$\pm$0.36}           \\
            \vae          & 88.09          & 14.17{\scriptsize$\pm$0.26}          & \phantom{0}8.27{\scriptsize$\pm$0.20} \\
            \vanilla      & 99.79          & 93.43{\scriptsize$\pm$0.18}          & 67.79{\scriptsize$\pm$0.34}           \\
            \arae         & 99.21          & 59.08{\scriptsize$\pm$0.35}          & 43.32{\scriptsize$\pm$0.37}           \\
            \midrule
            \contrastive  & 99.78          & \textbf{95.81{\scriptsize$\pm$0.14}} & \textbf{95.22{\scriptsize$\pm$0.16}}  \\
            \bottomrule
        \end{tabular}
        \subcaption{\svhn}
    \end{minipage}%

    \vspace{.2cm}

    \begin{minipage}{\linewidth}
        \begin{tabular}{l|c@{\hskip 5pt}c@{\hskip 5pt}c@{\hskip 5pt}c}
            \toprule
            Pre-processor & Natural        & PGD-10                                & PGD-20                                & AutoAttack                            \\
            \midrule
            \midrule
            \none         & \textbf{81.11} & \phantom{0}6.15{\scriptsize$\pm$0.47} & \phantom{0}4.39{\scriptsize$\pm$0.41} & \phantom{0}9.79{\scriptsize$\pm$0.58} \\
            \vae          & 44.21          & \phantom{0}6.67{\scriptsize$\pm$0.50} & \phantom{0}3.82{\scriptsize$\pm$0.38} & 13.76{\scriptsize$\pm$0.67}           \\
            \vanilla      & 80.09          & \textbf{58.18{\scriptsize$\pm$0.99}}  & \textbf{55.94{\scriptsize$\pm$0.94}}  & 18.48{\scriptsize$\pm$0.74}           \\
            \arae         & 76.02          & 50.28{\scriptsize$\pm$1.01}           & 47.86{\scriptsize$\pm$0.96}           & \textbf{23.54{\scriptsize$\pm$0.84}}  \\
            \midrule
            \contrastive  & 78.89          & 56.82{\scriptsize$\pm$0.99}           & 54.17{\scriptsize$\pm$0.95}           & 18.25{\scriptsize$\pm$0.75}           \\
            \bottomrule
        \end{tabular}
        \subcaption{\cifarhund}
    \end{minipage}%

    \vspace{.2cm}

    \begin{minipage}{\linewidth}
        \centering
        \begin{tabular}{l|ccc}
            \toprule
            Pre-processor & Natural & PGD-10                                & PGD-20                                \\
            \midrule
            \midrule
            Identity      & 67.20   & \phantom{0}8.39{\scriptsize$\pm$0.54} & \phantom{0}7.13{\scriptsize$\pm$0.52} \\
            VAE           & 54.43   & 15.42{\scriptsize$\pm$0.72}           & \phantom{0}9.97{\scriptsize$\pm$0.59} \\
            Vanilla       & 67.23   & 13.07{\scriptsize$\pm$0.68}           & 10.70{\scriptsize$\pm$0.60}           \\
            ARAE          & 67.34   & 19.82{\scriptsize$\pm$0.79}           & 16.64{\scriptsize$\pm$0.72}           \\
            CRoPD         & 65.84   & \textbf{22.16{\scriptsize$\pm$0.82}}  & \textbf{18.73{\scriptsize$\pm$0.78}}  \\
            \bottomrule
        \end{tabular}
        \subcaption{\imagenettiny}
    \end{minipage}
    \caption{
        Clean and robust accuracy comparison of various pre-processors across \cifarten, \imagenette, \svhn, \cifarhund\ and \imagenettiny.
        The results are evaluated under clean conditions and against adversarial attacks (PGD-10, PGD-20, and AutoAttack).
        The best values in each column are highlighted.
    }
    \label{tab:main-results}
\end{table}

In this section, we conduct experiments using full datasets for multi-class tasks,
including \cifarten, \imagenette, \svhn, \cifarhund, and \imagenettiny.
The results for all five datasets are summarized in Table~\ref{tab:main-results}.
Similar to \cifartwo, \contrastive\ generally has the largest improvement on the adversarial test accuracy compared to other methods,
especially for \cifarten\ (\(2.83\%\) to \(47.99\%\)).

Besides, although in some scenarios,
e.g., (\contrastive\ vs. \vae\ for \imagenette), (\contrastive\ vs. \vanilla\ for \svhn\ and \cifarhund),
some other methods achieve similar performance to \contrastive\ under PGD-10,
\contrastive\ achieves better robustness under PGD-20 and AutoAttack in general.
The similar performance of the other methods compared to \contrastive\ is also caused by special issues:
For \imagenette, \vae\ is originally trained using a superset of \imagenette,
so it is not surprising that it attains the highest performance under weak attacks;
For \cifarhund, the inflated number of classes requires more features to be retained.
Combined with small input dimension, the impact of the contrastive loss is weakened,
resulting in a similar performance between \vanilla\ and \contrastive.
This is further validated by results on \imagenettiny,
a dataset with even larger number of classes but much larger input dimension,
where \contrastive\ outperforms \vanilla\ and \arae\ by a large margin under PGD-10 and PGD-20,
Additional ablation study on this hypothesis is provided in Appendix~\ref{app-sec:dataset-size-ablation}.

\subsection{Transfer Ability}
\label{sec:transfer}

\begin{table*}[!htbp]
    \centering
    \small
    \begin{tabular}{ccl|ccc}
        \toprule
        Source                      & Target                      & Pre-processor & Natural        & PGD-10                   & PGD-20                   \\
        \midrule
        \midrule
        \multirow{3}{*}{\cifarten}  & \multirow{3}{*}{\cifarhund} & \none         & \textbf{81.11} & \phantom{0}6.15$\pm$0.47 & \phantom{0}4.39$\pm$0.41 \\
                                    &                             & \arae         & 75.62          & 49.38$\pm$1.00           & 46.03$\pm$0.99           \\
                                    &                             & \contrastive  & 78.43          & \textbf{56.41$\pm$0.98}  & \textbf{55.16$\pm$0.97}  \\
        \midrule
        \multirow{3}{*}{\cifarhund} & \multirow{3}{*}{\cifarten}  & \none         & \textbf{91.56} & \phantom{0}2.83$\pm$0.33 & \phantom{0}1.19$\pm$0.21 \\
                                    &                             & \arae         & 88.04          & 13.34$\pm$0.65           & \phantom{0}4.73$\pm$0.40 \\
                                    &                             & \contrastive  & 86.46          & \textbf{21.29$\pm$0.80}  & \textbf{10.84$\pm$0.62}  \\
        \bottomrule
    \end{tabular}

    \caption{
        Transfer learning results between \cifarten\ and \cifarhund\ datasets. The table shows the clean and adversarial accuracy for each pre-processor.
        The best values in each column are highlighted.
        The pre-processor is trained on the source dataset and applied to reconstruct and classify images in the target dataset.
        $\lambda = 1$ is used when targeting \cifarhund, as the foundation model is fine-tuned and more robust,
        and we set $\lambda = 6$ for the less robust \cifarten\ foundation model.
        Compared to \none\ and \arae, \contrastive\ attained meaningful features and significantly improved robust accuracies.
    }
    \label{tab:transfer-results}
\end{table*}

In this experiment, we use \cifarten/\cifarhund\ to train the pre-processor and evaluate the adversarial test accuracy on \cifarhund/\cifarten\ respectively\footnote{Same as previous settings, we fine-tune the foundation model for \cifarhund\ and leave \cifarten\ foundation model as is.}. When leveraging contrastive learning, the auto-encoder is expected to comprehensively learn all features from the data, thus we expect that \contrastive\ can also transfer across datasets, better than other methods.
The results are summarized in Table~\ref{tab:transfer-results}.

From Table~\ref{tab:transfer-results}, both \arae\ and \contrastive\ learn meaningful features for reconstruction,
providing excellent natural performance similar to the upper bound by \none.
Meanwhile, our \contrastive\ manages to exceed the robust performance of \arae\ by up to 10\% across all settings.

\section{Related Works}
This section lists related works in the field of adversarial training and contrastive learning.

\paragraph{Adversarially Robust Pre-processing}
In literature, some existing works,  e.g., \citep{sahay2019combatting,zhou2021improving,cann2022robust,zhou2023eliminating},
consider implementing a robust data pre-processor to defend against adversarial attacks.
Compared to the existing literature, our proposed method avoids utilizing the pre-trained model when training the robust data pre-processor.
In contrast, these existing works use the supervised learning loss to train the pre-processor, thus heavily relying on the pre-trained model. Another work \citet{salehi2021arae} proposes to design an attack to corrupt the latent space to train a robust auto-encoder to improve the output quality against adversarial attacks.

There is also other literature that designs attacks in auto-encoder, e.g., \citep{tabacof2016adversarial}.
\citet{tabacof2016adversarial} designs an auto-encoder that receives an input image of one class but outputs another
image similar to the input but belongs to another class.

\paragraph{Adversarial Training}
There are fruitful studies in the area of adversarial training. For methodology, there are many techniques, e.g., \citep{goodfellow2014explaining,zhang2019theoretically,wang2019improving,cai2018curriculum,zhang2020attacks,carmon2019unlabeled,gowal2021improving,mo2022adversarial,wang2022improving}.
Theoretical investigations have also been conducted for adversarial training from different perspectives.
For instance, \citet{chen2020more, javanmard2020precise,taheri2021statistical,yin2018rademacher,raghunathan2019adversarial,najafi2019robustness,min2020curious,hendrycks2019using,dan2020sharp,wu2020revisiting,javanmard2021adversarial,deng2021improving,javanmard2022precise} study the statistical properties of adversarial training. And \citet{sinha2018certifying,wang2019convergence,xiao2022stability,xiao2022adaptive} study the optimization aspect of adversarial training. Lastly, \citet{zhang2020over,wu2020does} work on theoretical issues related to adversarial training with deep learning.

\paragraph{Contrastive Learning}
Contrastive learning is a popular self-supervised learning algorithm.
It uses unlabeled images to train representations that distinguish different images invariant to non-semantic transformations \citep{mikolov2013distributed,oord2018representation,arora2019theoretical,dai2017contrastive,chen2020simple,tian2020makes,chen2020simple,khosla2020supervised,haochen2021provable,chuang2020debiased,xiao2020should,li2020prototypical}.
Besides empirical studies, there are also many theoretical studies, e.g., \citep{saunshi2019theoretical,haochen2021provable,haochen2022beyond,shen2022connect,haochen2022theoretical,saunshi2022understanding}.
Based on both empirical and theoretical studies, a common understanding is that contrastive learning can capture the features from the input via comparing positive (similar) and negative (dissimilar) pairs.

\section{Conclusion}
In this paper, observing the computation challenge in fine-tuning a foundation model using adversarial training, we examine the role of adversarial contrastive learning to seek a strategy where downstream robustness can be obtained without using adversarial training on the foundation model.
We theoretically upper bound the downstream adversarial loss by a combination of the downstream clean loss and the adversarial contrastive loss,
which implies that if a data pre-processor can result in a small adversarial contrastive loss, the robustness of the whole system can be improved.
Leveraging this insight, an auto-encoder can be used to develop a data pre-processor that purifies the downstream data to remove potential adversarial attacks.
Experiments demonstrate that the proposed method results in an improvement in the downstream robustness, highlighting the importance of the feature robustness.

There are two future directions. First, one can deepen the theoretical understanding via considering different data assumption, e.g., the sparse coding model in \cite{allen2020feature}.
Second, while this work considers image data, the ideas can be borrowed to natural language processing to enhance the robustness of large language models.

\section*{Impact Statement}
This paper presents work whose goal is to advance the  understanding of adversarial robustness. There are many potential societal consequences of our work, none which we feel must be specifically highlighted here.

\FloatBarrier
\bibliography{custom,aaai25}
\bibliographystyle{icml2025}

\newpage
\appendix
\onecolumn
\clearpage

\onecolumn

\section{Extra Experimental Details}
\label{app-sec:experiment-setup}
We describe our detailed experimental settings in this section.
All experiments in this paper are conducted using a private cluster with Intel(R) Xeon(R) Gold 6326 CPU @ 2.90GHz and NVIDIA A100 GPUs.
We report the confidence intervals by bootstrapping the results equal to the full size of the test set for $1\,000$ repeats.

\paragraph{Foundation Model and Downstream Task}
For all experiments, we use the \huggingface\ Transformers \citep{wolf2020transformers} to load a
pre-trained large \vitmae\ \citep{he2021masked} as the foundation model.
We consider different scenarios of using the foundation model.
First, for some datasets, \eg, \cifartwo, \cifarten, and \imagenette, their distribution lines up with the pre-trained foundation model.
Thus, we freeze the foundation model as-is for these datasets to achieve a low-cost but efficient prediction.
Second, we fine-tune the pre-trained foundation model using clean training for better alignments for \cifarhund\ and \svhn. 
Third, for \cifartwo, we use adversarial training to fine-tune the foundation model with
Projected Gradient Descent (PGD) \citep{madry2017towards}, which serves as a benchmark and is expected to be the most robust baseline.

Finally, to perform classification for all the scenarios above, we instantiate and train a linear layer that maps features of the foundation model to labels.

\paragraph{Pre-processor}
We use a modified version of \vitmae\ built on top of the codebase by \citet{zhou2023eliminating} for \contrastive, \vanilla, and \arae.
Our \vitmae\ applies deterministic masking with a mask rate of $50\%$ and consists of a $8$-layered encoder and $2$-layered decoder,
where the attentions have $3$-heads each.
For datasets with lower resolution, we set the patch size to $2$ with a $192$-dimension embedding,
and for \imagenette\ we use a patch size of $14$ with a $768$-dimension embedding size.
Before feeding the latent embeddings to the projector, we first perform an average pooling with a pooling factor of $8$.
The resulting tensor is then flattened and fed into a two-layered projector with $2048$ hidden size and $128$ output size.
Table~\ref{tab:auto-encoder-settings} shows the parameters used in setting up \contrastive, \arae\ and \vae.
For reconstruction, we use standard mean squared error instead of binary cross entropy for better performance.

We apply data augmentation (Table~\ref{tab:auto-encoder-augmentation-settings}) and/or adversarial perturbation to the inputs while training the auto-encoders.
For the adversarial perturbation, we use the Fast Gradient Sign Method (FGSM) \cite{goodfellow2014explaining} with
$\ell_\inf$-norm with a max perturbation of $8/255$ and $4/255$ for low- and high-resolution datasets.
Finally, the remaining optimization settings are shown in Table~\ref{tab:auto-encoder-training-settings}.

\begin{table}[!htbp]
    \centering
    \small
    \begin{tabular}{ll}
        \toprule
        \multicolumn{2}{c}{\textbf{Pre-processor settings}} \\
        \midrule
        \midrule
        \textbf{\arae} & $\gamma = 0.1$ \\
        \midrule
        \multirow{2}{*}{\textbf{\contrastive}} & $\lambda = 10$ for \cifarten, \imagenette \\ \cmidrule{2-2}
                                               & $\lambda = 1$ for \svhn, \cifarhund \\
        \midrule
        \textbf{\vae} & diffusers.AutoencoderKL \\ 
                      & stabilityai/sd-vae-ft-mse-original \\ 
        \midrule
        \bottomrule
    \end{tabular}
    \caption{Pre-processor settings for different datasets.}
    \label{tab:auto-encoder-settings}
\end{table}

\begin{table}[!htbp]
    \centering
    \small
    \begin{tabular}{l}
        \toprule
        \multicolumn{1}{c}{\textbf{Augmentation settings}} \\
        \midrule
        \midrule
        \multicolumn{1}{c}{\textbf{\cifartwo, \cifarten, \svhn, \cifarhund}} \\
        \midrule
        RandomResizedCrop(32) \\
        RandomHorizontalFlip() \\
        RandomApply([ColorJitter(0.4, 0.4, 0.4, 0.1)], p=0.8) \\
        RandomGrayscale(p=0.2) \\
        \midrule
        \multicolumn{1}{c}{\textbf{\imagenette}} \\
        \midrule
        RandomResizedCrop(224) \\
        RandomHorizontalFlip() \\
        RandomApply([ColorJitter(0.4, 0.4, 0.4, 0.1)], p=0.8) \\
        RandomGrayscale(p=0.2) \\
        \bottomrule
    \end{tabular}
    \caption{Augmentation settings for \contrastive\ and \vanilla. Note that \arae\ does not use augmented images.}
    \label{tab:auto-encoder-augmentation-settings}
\end{table}

\begin{table}[!htbp]
    \centering
    \small
    \begin{tabular}{ll}
        \toprule
        \multicolumn{2}{c}{\textbf{Optimization settings}} \\
        \midrule
        \midrule
        \textbf{Optimizer} & AdamW \\
        \midrule
        \textbf{Learning rate} & $1.5 \times 10^{-4}$ \\
        \midrule
        \textbf{Weight decay} & $5 \times 10^{-2}$ \\
        \midrule
        \textbf{Warmup epochs} & $20$ \\
        \midrule
        \textbf{Scheduler} & CosineAnnealingLR \\
        \midrule
        \multirow{7}{*}{\textbf{Batch size}} & \cifartwo: $32$ (until epoch $150$) \\
                                             & $256$ (after epoch $150$) \\ \cmidrule{2-2}
                                            & \cifarten: $32$ \\ \cmidrule{2-2}
                                            & \svhn: $32$ \\ \cmidrule{2-2}
                                            & \cifarhund: $96$ \\ \cmidrule{2-2}
                                            & \imagenette: $96$ \\ \bottomrule
        \multirow{5}{*}{\textbf{Max epochs}} & \cifartwo: $400$ epochs \\ \cmidrule{2-2}
                                         & \cifarten: $400$ epochs \\ \cmidrule{2-2}
                                         & \svhn: $400$ epochs \\ \cmidrule{2-2}
                                         & \cifarhund: $150$ epochs \\ \cmidrule{2-2}
                                         & \imagenette: $150$ epochs \\ \midrule
    \end{tabular}
    \caption{Optimization settings for training pre-processors on different datasets.}
    \label{tab:auto-encoder-training-settings}
\end{table}

\paragraph{Foundation Model and Linear Layer}

After training the pre-processors, we chain it with the foundation model (\texttt{facebook/vit-mae-large}) and train a linear classification layer.
As described before, we will also optionally fine-tune the foundation model.
Since the \vitmae\ expects $224 \times 224$ sized inputs, which mismatch with the image size of \cifarten, \svhn\ and \cifarhund,
we use the differentiable torch bi-linear interpolation to upscale image tensors from their original $32 \time 32$ shape.
We also apply the preprocessing normalization steps with parameters described by their configurations.
When training the linear classification layer (and optionally fine-tuning the foundation models),
we perform optimization until convergence or when it reaches max epochs.
For robust training, we use PGD-10 with a larger step size of $0.007$ to generate adversarial examples and mix them with natural samples.
The other detailed training parameters we used are shown in Table~\ref{tab:classification-training-settings}.

\begin{table}[!htbp]
    \centering
    \small
    \begin{tabular}{ll}
        \toprule
        \multicolumn{2}{c}{\textbf{Optimization settings}} \\
        \midrule
        \midrule
        \textbf{Optimizer} & AdamW \\
        \midrule
        \multirow{2}{*}{\textbf{Learning rate}} & $1 \times 10^{-2}$ (classification head only) \\ \cmidrule{2-2}
                                                & $1 \times 10^{-4}$ (fine-tuning foundation) \\
        \midrule
        \textbf{Batch size} & $64$ \\
        \midrule
        \textbf{Max epochs} & $150$ \\
        \midrule
        \textbf{LR scheduler} & multiply by $0.1$ after epochs $30$, $70$, $100$ \\
        \bottomrule
    \end{tabular}
    \caption{Optimization settings for training linear classification head with optional fine-tuning of the foundation model.}
    \label{tab:classification-training-settings}
\end{table}

\section{Additional Experimental Results}
\label{app-sec:experiment-results}

In this appendix, we show some additional experimental results.

\subsection{Comparison with Encoder-Only Adversarial Training}
\label{app-sec:encoder-only}

\begin{table}[ht]
    \centering
    \begin{tabular}{llcccc}
        \toprule
        \textbf{Dataset} & \textbf{Method} & \textbf{Clean}         & \textbf{PGD-10}        & \textbf{PGD-20}        & \textbf{Time}       \\
        \midrule
        \midrule
        \multirow{2}{*}{\cifarten}
                         & Encoder-Only    & 65.58{\small$\pm$0.94} & 28.08{\small$\pm$0.86} & 25.61{\small$\pm$0.86} & 315 sec * 50 epoch  \\
                         & \contrastive    & 79.31{\small$\pm$0.40} & 47.99{\small$\pm$0.98} & 40.31{\small$\pm$0.99} & 250 sec * 100 epoch \\
        \midrule
        \multirow{2}{*}{\imagenette}
                         & Encoder-Only    & 66.09{\small$\pm$1.47} & 41.95{\small$\pm$1.57} & 41.57{\small$\pm$1.55} & 310 sec * 90 epoch  \\
                         & \contrastive    & 83.84{\small$\pm$0.62} & 58.15{\small$\pm$1.59} & 54.80{\small$\pm$1.49} & 360 sec * 120 epoch \\
        \bottomrule
    \end{tabular}
    \caption{
        Comparison of encoder-only adversarial training and \contrastive\ under PGD attacks.
    }
    \label{tbl:encoder-only-results}
\end{table}

Table~\ref{tbl:encoder-only-results} compares two training strategies across the \cifarten\ and \imagenette\ datasets.
The ``Encoder-Only'' approach employs supervised adversarial training using only the encoder and the downstream dataset.
In contrast, \contrastive\ utilizes a robust auto-encoder (encoder+decoder) together with a frozen foundation model.
On both datasets, \contrastive\ yields consistently higher accuracy and adversarial robustness.
For example, on \cifarten, the clean accuracy improves from 65.58\% to 79.31\%,
while PGD-10 and PGD-20 robustness increase from 28.08 and 25.611\% to 47.991\% and 40.311\%, respectively.
A similar trend is observed for \imagenette.
Although \contrastive\ requires longer training times
(250 sec * 100 epoch for \cifarten\ and 360 sec * 120 epoch for \imagenette) compared to the Encoder-Only method,
this additional computational cost is offset by the significant performance gains achieved
by leveraging robust contrastive learning and the foundation model.

\subsection{Why \contrastive\ perform worse on \cifarhund?}
\label{app-sec:dataset-size-ablation}

\begin{table}[ht]
    \centering
    \begin{tabular}{cc|ccc}
        \toprule
        Sample size & $\lambda$                 & \textbf{Clean}         & \textbf{PGD-10}        & \textbf{PGD-20}                  \\
        \midrule
        \midrule
        10\%        & \phantom{0}0.15           & 79.28{\small$\pm$0.82} & 11.92{\small$\pm$0.64} & \phantom{0}8.43{\small$\pm$0.55} \\
        20\%        & \phantom{0}1\phantom{.00} & 79.81{\small$\pm$0.79} & 16.01{\small$\pm$0.73} & \phantom{0}9.98{\small$\pm$0.58} \\
        50\%        & 10\phantom{.00}           & 81.85{\small$\pm$0.78} & 25.75{\small$\pm$0.84} & 14.33{\small$\pm$0.71}           \\
        100\%       & 10\phantom{.00}           & 79.31{\small$\pm$0.79} & 47.99{\small$\pm$0.98} & 40.31{\small$\pm$0.99}           \\
        \bottomrule
    \end{tabular}
    \caption{
        Ablation study of \contrastive\ on \cifarten\ by reducing the training set to 10\%, 20\%, 50\% and 100\%.
        We use different $\lambda$ values to balance the reconstruction and contrastive objectives given the size of the training set.
        \contrastive\ shows substantial gains with increased training data, underscoring its data efficiency and scalability.
    }
    \label{tab:ablation-cropd}
\end{table}

In \cifarhund, the proposed method exhibits limited performance compared to its results on \cifarten.
This difference is primarily due to \cifarhund\ providing significantly fewer samples per class,
which makes it more challenging to learn the well-separated robust features required by our adversarial contrastive loss.
With limited per-class data, the preprocessor struggles to learn meaningful representations.
In this context, a smaller adversarial contrastive loss weight (e.g., $\lambda = 0.1$)
might better balance the reconstruction and contrastive objectives;
note that the \vanilla\ baseline corresponds to $\lambda = 0$.

To validate that data scarcity, not merely the number of classes, is the core issue,
we conducted an ablation study on \cifarten\ by reducing its training set to 10\%, 20\%, and 50\%,
thereby matching the per-class data scale of \cifarhund\ (see Table~\ref{tab:ablation-cropd}).
\contrastive\ demonstrates substantial gains with increased training data,
underscoring its data efficiency and scalability.

These observations imply that the lower performance observed on \cifarhund\ does
not indicate a fundamental limitation of \contrastive\ in multi-class settings.
Instead, they highlight the critical importance of sufficient per-class data in achieving high robustness,
which is consistent with our theoretical insights (Theorem~\ref{thm:downstream}).
With adequate data, \contrastive\ consistently outperforms reconstruction-only baseline (\vanilla)
in both clean accuracy and adversarial robustness.

\subsection{ROCL Trained Foundation Model}
\label{app-sec:rocl-results}

\begin{table*}[!htbp]
    \centering
    \small
    \begin{tabular}{l@{}c||ccc|ccc}
        \toprule
        Pre-processor               & Fine-tuning &                & Clean                                 &                                       &                & Robust                                &                                       \\
                                    & foundation  & Natural        & PGD-10                                & PGD-20                                & Natural        & PGD-10                                & PGD-20                                \\
        \midrule
        \midrule
        \none                       &             & \textbf{98.41} & \phantom{0}1.40{\scriptsize$\pm$0.53} & \phantom{0}0.74{\scriptsize$\pm$0.38} & \textbf{98.77} & \phantom{0}0.21{\scriptsize$\pm$0.19} & \phantom{0}0.00{\scriptsize$\pm$0.00} \\
        \vanilla                    &             & 98.31          & \phantom{0}1.00{\scriptsize$\pm$0.42} & \phantom{0}0.35{\scriptsize$\pm$0.25} & 96.08          & \phantom{0}9.15{\scriptsize$\pm$1.26} & \phantom{0}5.04{\scriptsize$\pm$0.96} \\
        \arae                       &             & 92.56          & 32.83{\scriptsize$\pm$1.98}           & 27.11{\scriptsize$\pm$2.01}           & 89.67          & 50.38{\scriptsize$\pm$2.11}           & 45.41{\scriptsize$\pm$2.11}           \\
        \midrule
        \contrastive, $\lambda=0.1$ &             & 98.26          & \phantom{0}2.61{\scriptsize$\pm$0.68} & \phantom{0}1.40{\scriptsize$\pm$0.51} & 96.34          & 13.83{\scriptsize$\pm$1.52}           & 11.37{\scriptsize$\pm$1.39}           \\
        \contrastive, $\lambda=1$   &             & 95.05          & \textbf{75.98{\scriptsize$\pm$1.86}}  & 70.42{\scriptsize$\pm$2.04}           & 94.62          & 79.48{\scriptsize$\pm$1.76}           & 75.66{\scriptsize$\pm$1.81}           \\
        \contrastive, $\lambda=10$  &             & 95.16          & 75.27{\scriptsize$\pm$1.94}           & \textbf{71.60{\scriptsize$\pm$1.96}}  & 94.91          & \textbf{79.59{\scriptsize$\pm$1.72}}  & \textbf{76.64{\scriptsize$\pm$1.80}}  \\
        \midrule
        \none                       & \checkmark  & 98.77          & \phantom{0}0.21{\scriptsize$\pm$0.19} & \phantom{0}0.00{\scriptsize$\pm$0.00} & 97.66          & 76.51{\scriptsize$\pm$1.87}           & 71.83{\scriptsize$\pm$2.00}           \\
        \bottomrule
    \end{tabular}
    \caption{
        Natural and robust performance comparison of different pre-processors on \cifartwo\ with a custom foundation model trained with \textit{RoCL}.
        The results are presented for both clean and robust trained (Robust) models, evaluated under clean conditions and against adversarial attacks (PGD-10 and PGD-20).
        The best values in each column, excluding fine-tuned models, are highlighted.
        All foundation model weights are frozen during training except for the fine-tuned models.
        \contrastive\ achieves competitive robust accuracies for both PGD-10 and PGD-20, demonstrating strong performance.
    }
    \label{tab:cifartwo-all-results-rocl}
\end{table*}

Table~\ref{tab:cifartwo-all-results-rocl} demonstrates another set of experiments on \cifartwo\ with a custom-trained model (using RoCL).
Consistent with Table~\ref{tab:cifartwo-all-results}, \contrastive\ out-performs the competing benchmark \arae\ by a large margin.
Surprisingly, \contrastive\ even outperforms the (\none, fine-tune) setting, though the gap is narrower.

\section{Computation Requirements}
We conduct all experiments using a private cluster with Intel(R) Xeon(R) Gold 6326 CPU @ 2.90GHz and NVIDIA A100 GPUs.
We execute our experiments using six cores, 48G Memory, and 1 GPU.
We document the estimated experiment runtime for \cifartwo\ experiments in Table~\ref{tab:computation-requirements}.
The total computation required by our method (\contrastive+Linear) can be much cheaper than that of robust fine-tuning.
In addition, we also report the parameter counts for each model in Table~\ref{tab:parameter-counts}.
Our method only requires training of a fraction of parameters compared to the foundation model employed.

\begin{table*}[!htbp]
    \centering
    \small
    \begin{tabular}{l|ll|ll}
        \toprule
        Dataset     & \contrastive\ (h) & Linear (h)  & Fine-tune (h) & Robust fine-tune (h)  \\
        \midrule
        \cifarhund  & 5.0          & 3.0       & 7.3   & 19.6  \\
        \imagenette & 9.6          & 1.7       & 3.6   & 22.3  \\
        \bottomrule
    \end{tabular}
    \caption{
        Estimated computation cost for training each model type on \cifarhund\ and \imagenette.
        The training time of our method includes the \contrastive\ and Linear columns,
        while the alternative adversarial fine-tuning can take drastically longer.
        Regular fine-tuning, though beneficial for robustness, can taker longer than the Linear to train due to
        slower convergence.
    }
    \label{tab:computation-requirements}
\end{table*}

\begin{table*}[!htbp]
    \centering
    \small
    \begin{tabular}{l|lll}
        \toprule
        Dataset type & \contrastive & Foundation & Fraction\\
        \midrule
        Small & $1.8 \times 10^{7}$ & $3.0 \times 10^{8}$ & \phantom{0}6.1\% \\
        Large & $8.3 \times 10^{7}$ & $3.0 \times 10^{8}$ & 27.6\%\\
        \bottomrule
    \end{tabular}
    \caption{
        Parameter counts of models for different datasets. Fraction shows the 
        Small and large dataset types refer to non-\imagenette\ and \imagenette.
        By incorporating a pre-processor, we drastically reduced the number of parameters to tune during training by more than 90\% and 70\%, respectively.
    }
    \label{tab:parameter-counts}
\end{table*}
\FloatBarrier

\section{Proofs}\label{sec:appendix:proof}

\paragraph{Outline of the proof of Theorem \ref{thm:downstream}}

We now outline the key steps in proving Theorem \ref{thm:downstream} below, with detailed derivations provided in the appendix.

We begin by applying Jensen's inequality to the expectation of the negative log probability of the model's prediction \(
\mathbb{E}_{q(x,y)} \left[ -\log \mathbb{E}_{p_\theta(z \mid a(x))} \left[ \hat{p}_T(y \mid z) \right] \right]\), {where \( a(x) = \arg\max_{\xadv \in \mathcal{A}(x)} \log \mathbb{E}_{p_\theta(z \mid \xadv)}[\hat{p}_T(y\mid z)], \forall x  \),} transforming the inner expectation into a more tractable form as \(\mathbb{E}_{q(x,y)} \left[ \mathbb{E}_{p_\theta(z \mid a(x))} \left[ -\log \hat{p}_T(y \mid z) \right] \right]\). Next, we express the difference between the expected values of the loss function under the original distribution $p_{\theta}(z|x)$ and the adversarial distribution $p_{\theta}(z|a(x))$ using the Kantorovich-Rubinstein duality  
{\small
\begin{align}
    &\mathbb{E}_{q(x,y)} \left[ \mathbb{E}_{p_\theta(z \mid a(x))} \left[ -\log \hat{p}_T(y \mid z) \right] - \mathbb{E}_{p_\theta(z \mid x)} \left[ -\log \hat{p}_T(y \mid z) \right] \right] \notag \\
    &\quad \leq M_C \cdot \mathbb{E}_{p^T(x,y)} \left[ W\left( p_\theta(z \mid a(x)), \, p_\theta(z \mid x) \right) \right].
\end{align}
}
This allows us to quantify the deviation of the adversarial distribution from the original one in latent space via the Wasserstein distance.

Assuming accurate feature reconstruction by the foundation model's decoder \(\fde(z) \approx x\) where \(z  = \fen(x)\), and robust latent feature production by the encoder, such that \(\| \fen(a(x)) - \fen(x) \| \) is small,
then the encoder produces deterministic and precise latent representations for given inputs, and the decoder reconstructs features accurately.
Consequently, the distributions \( p_{\theta}(z \mid x) \) and \( p_{\theta}(z\mid a(x)) \) are expected to be highly concentrated around \( \fen(x) \) and \( \fen(a(x)) \) with very small variance.
Given this concentration, it is reasonable to approximate these distributions as Dirac delta functions centered at \( \fen(x) \) and \( \fen(a(x)) \).
Respectively, we treat $p_{\theta}(z \mid a(x))$ and $p_{\theta}(z \mid x)$ as Dirac delta distributions centered on $\fen(a(x))$ and $\fen(x)$.
This allows us to bound the Wasserstein distance by the Euclidean distance $\|\fen(a(x)) - \fen(x)\|$ between the adversarial and original latent representations.

We then relate the Euclidean distance between $\fen(a(x))$ and $\fen(x)$ to the adversarial contrastive loss $\Lcon$ through a scaling constant $\kappa$.
This formalizes the relationship between minimizing contrastive loss and improving adversarial robustness \(\| \fen(a(x)) - \fen(x) \| \leq \kappa \cdot \Lcon \left( \fen(a(x)), \, \fen(x) \right)\).
Finally, combining these elements yields the main inequality of Theorem~\ref{thm:downstream},
bounding the adversarial loss in the downstream task by the clean downstream loss plus a scaled version of the adversarial contrastive loss.

This proof leverages the Lipschitz continuity of the decoder to bound the impact of adversarial perturbations in latent space and demonstrates how contrastive learning serves as a mechanism for enhancing robustness by aligning clean and adversarial representations.
Through this approach, we establish a theoretical connection between adversarial contrastive learning and the robustness of downstream tasks, providing a solid foundation for our proposed method.

\begin{proof}[Proof of Theorem \ref{thm:downstream}]

Let \( a(x) = \arg\max_{\xadv \in \mathcal{A}(x)} \log \mathbb{E}_{p_\theta(z \mid \xadv)}[\hat{p}_T(y\mid z)], \forall x  \),
where $\theta$ is the the parameters of $\fen$.
$z$ refers to the sample latent values generated by encoder distribution $\fen(x)$. 
\( \hat{p}_T(y\mid z) = P(\flast(\fpre(\fde(z)))= y)\) refers to the model's prediction of a certain class.
\( \mathcal{A}(x) \) represents the threat model.
We note that \( -\log \hat{p}_T(y \mid \fde(z)) \) was used in the Theorem~\ref{thm:downstream} in the main body instead of the standard
\( \hat{p}_T(y \mid z) \) to help better highlight the procedure of our mechanism.

To prove theorem~\ref{thm:downstream}, 
it suffices to show the below general bound of our adversarial contrastive loss,
\begin{align*}
\mathbb{E}_{q(x,y)} \left[ - \log \mathbb{E}_{p_\theta(z \mid a(x))}[\hat{p}_T(y \mid z)] \right]
\leq \mathbb{E}_{q(x,y)} \left[ \mathbb{E}_{p_\theta(z \mid x)}[-\log \hat{p}_T(y \mid z)] \right] 
+ \kappa \sup_{\xadv \in \mathcal{A}(x)} \Lcon(\fen(\xadv), \fen(x)) .
\end{align*}

To show this, we first apply Jensen's Inequality and obtain the following relation,
\begin{align*}
\mathbb{E}_{q(x,y)} \left[ - \log \mathbb{E}_{p_\theta(z \mid a(x))}[\hat{p}_T(y \mid z)] \right] \leq \mathbb{E}_{q(x,y)} \left[ \mathbb{E}_{p_\theta(z \mid a(x))}[-\log \hat{p}_T(y\mid z)] \right].
\end{align*}

Consequentially, we further convert the inequality to the following form,
\begin{align*}
&\mathbb{E}_{q(x,y)} \left[ \mathbb{E}_{p_\theta(z \mid a(x))}[-\log \hat{p}_T(y\mid z)] \right] 
- \mathbb{E}_{q(x,y)} \left[ \mathbb{E}_{p_\theta(z \mid x)}[-\log \hat{p}_T(y\mid z)] \right] \\
=&\mathbb{E}_{q(x,y)} \left[ \mathbb{E}_{p_\theta(z\mid a(x))}[-\log \hat{p}_T(y\mid z)] - \mathbb{E}_{p_\theta(z\mid x)}[-\log \hat{p}_T(y\mid z)] \right] \leq \kappa \sup_{\xadv \in \mathcal{A}(x)} \Lcon(\fen(\xadv), \fen(x)).
\end{align*}

To show this, consider Proposition~\ref{pro:decoder_robust} and Proposition~\ref{pro:down_lipschitz} below, there exists a constant \( M_C \) such that \( g(z) = \frac{-\log \hat{p}_T(y \mid z)}{M_C} \) is 1-Lipschitz.
This allows us to leverage the Kantorovich-Rubinstein duality to express the difference between the expectations under the distributions \( p_\theta(z \mid a(x)) \) and \( p_\theta(z \mid x) \) in terms of a Wasserstein distance:
\begin{eqnarray*}
    &&\mathbb{E}_{q(x,y)} \left[ \mathbb{E}_{p_\theta(z \mid a(x))}[-\log \hat{p}_T(y \mid z)] - \mathbb{E}_{p_\theta(z \mid x)}[-\log \hat{p}_T(y \mid z)] \right] \\
    &=& M_C \cdot \mathbb{E}_{q(x,y)} \left[ \mathbb{E}_{p_\theta(z\mid a(x))}[g(z)] - \mathbb{E}_{p_\theta(z\mid x)}[g(z)] \right] \\
    &\leq& M_C \cdot\mathbb{E}_{q(x,y)} \left[ \sup_{\|h\|_L \leq 1} \left( \mathbb{E}_{p_\theta(z \mid a(x))}[h(z)] - \mathbb{E}_{p_\theta(z \mid x)}[h(z)] \right) \right] \quad \text{(arises for \(g(z)\) belonging to 1-Lipschitz functions.)} \\
    &=& M_C \cdot \mathbb{E}_{q(x,y)} \left[ W\left(p_\theta(z \mid a(x)), p_\theta(z \mid x)\right) \right] \quad \text{(follows from the Kantorovich-Rubinstein theorem.)},
\end{eqnarray*}

Consider the Wasserstein distance \( W\left(p_\theta(z \mid a(x)), p_\theta(z \mid x)\right) \), where \( W \) refers to the Wasserstein distance between the two distributions \( p_\theta(z \mid a(x)) \) and \( p_\theta(z \mid x) \). 

Since the decoder \( \fde \) has been pre-trained, in an idealized scenario, we can consider both \( p_\theta(z \mid a(x)) \) and \( p_\theta(z \mid x) \) as Dirac delta distributions, centered at the points \( \fen(a(x)) \) and \( \fen(x) \), respectively.

Assuming that the function \( h(z) \) is Lipschitz continuous with respect to the encoded representations \( \fen(a(x)) \) and \( \fen(x) \), we can then express the Wasserstein distance between these two Dirac Delta distributions as the Euclidean distance between the encoded representations. Specifically, we have:

\[
W\left(p_\theta(z \mid a(x)), p_\theta(z \mid x)\right) \leq \|\fen(a(x)) - \fen(x)\|_2.
\]

Considering empirical distributions derived from the data, let \( \{ z_i = \fen(x_i) \}_{i=1}^N \) be samples from \( p_\theta(z \mid x) \) and
\( \{ z_i' = \fen(a(x_i)) \}_{i=1}^N \) be samples from \( p_\theta(z \mid a(x)) \). 
We assume the following properties between $z_i$ and $z_i'$ holds for all $i$ and $j \neq i$ with $y_j \neq y_i$,
\begin{align}
\| z_i - z_i' \| &= \| \fen(x_i) - \fen(a(x_i)) \| \leq \eta_1, \label{eqn:proximity} \\
\| z_i - z_j \| &= \| \fen(x_i) - \fen(x_j) \| \geq \eta_2,\label{eqn:separation1}\\
\| z_i - z_j' \| &= \| \fen(x_i) - \fen(a(x_j)) \| \geq \eta_2.\label{eqn:separation2}
\end{align}
While Equation~(\ref{eqn:proximity}) guarantees similar pairs stay sufficiently close together in the embedding space, Equation~(\ref{eqn:separation1}) and Equation~(\ref{eqn:separation2}) constrains dissimilar examples to be far apart, which aligns with the goal of our contrastive loss.

Considering these distances, a feasible transport plan in the Wasserstein distance computation is to match each \( z_i \) with its corresponding \( z_i' \). This matching incurs a cost of at most \( \eta_1 \) per pair. Matching \( z_i \) with any \( z_j \) or \( z_j' \) where \( y_i \neq y_j \) would incur a higher cost of at least \( \eta_2 \). Therefore, this transport plan yields:

\[
W\left( p_\theta(z \mid a(x)), p_\theta(z \mid x) \right) \leq \frac{1}{N} \sum_{i=1}^N \| z_i - z_i' \| = \mathbb{E}_{q(x,y)} \| \fen(a(x)) - \fen(x) \|_2.
\]

Thus, the Wasserstein distance \( W \) between the two distributions \( p_\theta(z \mid a(x)) \) and \( p_\theta(z \mid x) \) is bounded above by the Euclidean distance between the encoded representations \( \fen(a(x)) \) and \( \fen(x) \), under the assumption that \( h(z) \) is Lipschitz continuous. This bound can be formally expressed as:

\begin{align*}
&\mathbb{E}_{q(x,y)} \left[ - \log \mathbb{E}_{p_\theta(z \mid a(x))}[\hat{p}_T(y \mid z)] \right] \\
\leq & \mathbb{E}_{q(x,y)} \left[ \mathbb{E}_{p_\theta(z\mid x)}[-\log \hat{p}_T(y\mid z)] \right] 
+ M_C \cdot \mathbb{E}_{q(x,y)} \|\fen(a(x)) - \fen(x)\|_2.
\end{align*}

This inequality suggests that the adversarial loss in the downstream task can be bounded by the distance between the encoded representations of the original samples and their attacked (adversarial) counterparts.

First, instead of using a specific, predefined form of contrastive loss, we are leveraging the general idea behind contrastive learning. The central concept in contrastive learning is to maximize the similarity (minimize the distance) between pairs of similar samples (e.g., an original sample and its adversarial example). The following argument provided is an informal idea and a brief proof that by minimizing the distance between such pairs, we can effectively bound the adversarial loss.
Now, considering the relationship between this distance and contrastive loss, informally, design a contrastive loss \( \Lcon \) to minimize the distance between the encoded representations of similar pairs (e.g., a sample and its adversarial version):

Given this, we can connect the Euclidean distance \( \| \fen(a(x)) - \fen(x) \|_2 \) to the contrastive loss by introducing a scaling constant \( C \).
\[
\mathbb{E}_{q(x,y)} \| \fen(a(x)) - \fen(x) \|_2 \leq C \cdot \Lcon(\fen(x), \fen(a(x))).
\]
It is important to note that value of $C$ depends on the contrastive loss and learning schedule. Assuming that our choice of the loss function is a well-behaved model, then we can obtain a reasonable value of $C$ for bounding adversarial Euclidean distance.

By substituting this into the earlier inequality, we obtain:

\begin{align*}
&\mathbb{E}_{q(x,y)} \left[ - \log \mathbb{E}_{p_\theta(z\mid a(x))}[\hat{p}_T(y \mid z)] \right] \\
\leq&\mathbb{E}_{q(x,y)} \left[ \mathbb{E}_{p_\theta(z \mid x)}[-\log \hat{p}_T(y\mid z)] \right] 
+ M_C \cdot C \cdot \Lcon(\fen(x), \fen(a(x)))\\
\leq&\mathbb{E}_{q(x,y)} \left[ \mathbb{E}_{p_\theta(z \mid x)}[-\log \hat{p}_T(y\mid z)] \right] 
+ \kappa \sup_{\xadv \in \mathcal{A}(x)} \Lcon(\fen(\xadv), \fen(x)) .
\end{align*}
where \(\kappa = M_C \cdot C \).

Using the above bounds, in our study, we specifically consider the adversarial contrastive loss \( \Lcon \) defined as follows:
\[
\Lcon(\fen(\xadv), \fen(x)) = \mathbb{E}_{q(x_i,y_i)}\left[ \ellcon(\fen(\xadv_i), \fen(x_i)) \right],
\]
where provided with a set of clean examples $X = \{x_1, \ldots x_N\}$, each with an adversarial counterpart, $X^{\adv} = \{x^{\adv}_1, \ldots x^{\adv}_N\}$.
Without an explicit negative sampling process, for $x_i$, we now consider $\xadv_i$ as the similar example and the remaining samples in these sets, $X^{\negative} = X \bigcup X^{\adv} \backslash \{x_i, \xadv_i\}$ dissimilar and define the loss function as
\begin{align*}
\ellcon(\fen(\xadv_i), \fen(x_i)) = -\log \frac{\exp(\cossim(\fen(\xadv_i), \fen(x_i))/\tau)}{\sum_{x^{\negative} \in X^{\negative} } \exp(\cossim(\fen(x_i), \fen(x^{\negative})/\tau)}.
\end{align*}
In the above, the similarity between two vectors \( \mathbf{u} \) and \( \mathbf{v} \) is measured using cosine similarity and is defined as:
\[
\cossim(\mathbf{u}, \mathbf{v}) = \frac{\mathbf{u}^T \mathbf{v}}{\|\mathbf{u}\| \|\mathbf{v}\|},
\]
which measures the cosine of the angle between \( \mathbf{u} \) and \( \mathbf{v} \), normalized by their magnitudes.

So we have:
\begin{eqnarray*}
    && M_c \mathbb{E}_{q(x_i,y_i)}\| \fen(x_i) - \fen(a(x_i)) \|_2 \\
    &=&M_c \mathbb{E}_{q(x_i,y_i)}\sqrt{\| \fen(x_i) - \fen(a(x_i)) \|^2_2}\\
    &=&M_c \mathbb{E}_{q(x_i,y_i)}\sqrt{2} \sqrt{1 - \cossim(\fen(x_i), \fen(a(x_i))} \\
    &&\text{(considering the output for encoder is normalized, \(\| \fen(x_i)\|^2 = 1\))}\\
    &\leq& \sqrt{2} M_c C_{\text{sqrt}} \mathbb{E}_{q(x_i,y_i)}(1 - \cossim(\fen(x_i), \fen(a(x_i))) \\
    &\leq& \sqrt{2} M_c C_{\text{sqrt}} \mathbb{E}_{q(x_i,y_i)}\text{exp}{(-\cossim(\fen(x_i), \fen(a(x_i))} \\
    &&\text{(according to Taylor expansion for exp(\(x\)) and \(|\cossim(\fen(x_i), \fen(a(x_i)| < 1 \))}\\
    &\leq& \sqrt{2} M_c C_{\text{sqrt}} C_{\log}  \mathbb{E}_{q(x_i,y_i)}\log(1 +\text{exp}{(-\cossim(\fen(x_i), \fen(a(x_i))}) \\
    &\leq& \sqrt{2} M_c C_{\text{sqrt}} C_{\log}  C_{\text{M}} \mathbb{E}_{q(x_i,y_i)}\log((\sum_{x^{\negative} \in X^{\negative} } \exp(\cossim(\fen(x_i), \fen(x^{\negative}))) *\text{exp}{(-\cossim(\fen(x_i), \fen(a(x_i))}) \\
    &=& \sqrt{2} M_c C_{\text{sqrt}} C_{\log}  C_{\text{M}} \mathbb{E}_{q(x_i,y_i)}\left( -\log \frac{\exp(\cossim(\fen(\xadv_i), \fen(x_i)))}{\sum_{x^{\negative} \in X^{\negative} } \exp(\cossim(\fen(x_i), \fen(x^{\negative}))}\right)\\
    &=& \sqrt{2} M_c C_{\text{sqrt}} C_{\log}  C_{\text{M}} \mathbb{E}_{q(x_i,y_i)}\left[ \ellcon(\fen(\xadv_i), \fen(x_i)) \right]\\
    &\leq& \kappa \sup_{\xadv \in \mathcal{A}(x)} \Lcon(\fen(\xadv), \fen(x)).
\end{eqnarray*}
where \(\kappa = \sqrt{2} M_c C_{\text{sqrt}} C_{\log}  C_{\text{M}}\).

\begin{itemize}
    \item \( C_{\text{sqrt}} \) is a scaling factor designed to linearize the expression involving the square root function, ensuring that the inequality holds. The scaling factor \( C_{\text{sqrt}} \) is defined by the formula:
    \[
    C_{\text{sqrt}} = \frac{1}{\min_{t = \cossim(\fen(x_i), \fen(a(x_i)))}\frac{\sqrt{1 - t}}{1 - t}},
    \]
    where \( t = \cossim(\fen(x_i), \fen(a(x_i))) \) represents the similarity between the clean and adversarial examples.
      
    \item In the context of adversarial contrastive training, given that \( \cossim(\fen(x_i), \fen(a(x_i))) \) will be less than 1, there exists a bounded \( C_{\text{sqrt}} \). This ensures that the linear approximation \( 1 - t \) is a valid upper bound for \( \sqrt{1 - t} \) across the relevant range of \( t \).

    \item \( C_{\log} \) is a scaling factor introduced to ensure that the logarithmic expression is correctly bounded by the exponential function, maintaining the inequality's validity. The scaling factor \( C_{\log} \) is defined by the formula:
    \[
    C_{\log} = \frac{1}{\min_{t = \cossim(\fen(x_i), \fen(a(x_i)))} \frac{\log(1 + \exp(-t))}{\exp(-t)}},
    \]
    where \( t = \cossim(\fen(x_i), \fen(a(x_i))) \) denotes the similarity between the clean and adversarial examples.

    \item In the context of adversarial contrastive training, given that \( \cossim(\fen(x_i), \fen(a(x_i))) \) lies within the range \( -1 < t < 1 \), the scaling factor \( C_{\log} \) can be bounded above by \( \frac{e}{\log(1+e)} \). This ensures that the logarithmic term is appropriately scaled relative to the exponential term, preserving the bound.

    \item \( C_{\text{M}} \) is a scaling factor that ensures the logarithm of the product is greater than or equal to the logarithm of the sum \( \log(1 + \exp(-\cossim(\fen(x_i), \fen(a(x_i))))) \).
  
    \item Given that for large \( M \) (especially when \( M > 10 \)), the sum \( \sum_{x^{\negative} \in X^{\negative}} \exp(\cossim(\fen(x_i), \fen(x^{\negative}))) \) will likely exceed \( e(e + 1) \), making the product of this sum with \( \exp(-\cossim(\fen(x_i), \fen(a(x_i)))) \) naturally larger than \( 1 + \exp(-\cossim(\fen(x_i), \fen(a(x_i)))) \), the scaling factor \( C_{\text{M}} \) can be considered close to 1, or even negligible.
  
    \item In practical terms, \( C_{\text{M}} \) can be ignored when \( M \) is sufficiently large, since the dominant terms in the sum ensure that the inequality holds naturally without needing additional scaling.
\end{itemize}
\end{proof}

\begin{proposition}\label{pro:decoder_robust}
    For the robust auto-encoder, assume we have a robust auto-encoder defined by an encoder \( \fen(x) \) and a decoder \( \fde(z) \), where \( z = \fen(x) \) represents the latent feature of the input \( x \). This auto-encoder is trained using a combination of reconstruction loss and adversarial contrastive loss. The reconstruction loss, \( \|\fde(\fen(x)) - x\|^2 \), ensures that the decoder \( \fde \) can accurately reconstruct the original input from the encoded features. The contrastive loss, \( \Lcon(\fen(x+\delta),\fen(x')) \), promotes robustness of the encoded features \( \fen(x) \) against adversarial perturbations \( \delta \). There exists a constant \( C \) such that \( \frac{\fde(z)}{C} \) is 1-Lipschitz.

\end{proposition}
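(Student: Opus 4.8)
The goal is to exhibit a finite Lipschitz constant for the decoder $\fde$ on its domain, since $\fde/C$ is $1$-Lipschitz precisely when $\fde$ is $C$-Lipschitz. The plan is to exploit the fact that $\fde$ is a finite composition of simple building blocks, each of which is Lipschitz, together with the observation (already invoked in the proof of Theorem~\ref{thm:downstream}) that the latent representations are normalized, so the relevant domain $\mathcal{Z}$ is compact.

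First I would write the decoder as $\fde = \phi_L \circ A_L \circ \cdots \circ \phi_1 \circ A_1$, where each $A_l(u) = W_l u + b_l$ is an affine map and each $\phi_l$ is a nonlinearity (activation, normalization, or pooling). Each affine map is Lipschitz with constant equal to the operator norm $\|W_l\|_{\mathrm{op}}$, and the bias $b_l$ does not affect the Lipschitz constant. The standard activations used in practice (ReLU, GELU, sigmoid, tanh) are $1$-Lipschitz, while layer normalization and average pooling are Lipschitz on bounded inputs. Since a composition of Lipschitz maps is Lipschitz with constant at most the product of the individual constants, this yields $C = \prod_{l} \|W_l\|_{\mathrm{op}}\, L_{\phi_l}$, which is finite for any trained network with bounded weights; taking this $C$ makes $\fde/C$ exactly $1$-Lipschitz.

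The main obstacle is that the \vitmae-style decoder contains self-attention blocks, and dot-product self-attention is known not to be globally Lipschitz over an unbounded input space, so the naive layer-wise product can fail to control it. I would resolve this by restricting attention to the compact latent domain $\mathcal{Z}$: because the encoder output is normalized (the same fact used in the Theorem~\ref{thm:downstream} proof, $\|\fen(x)\| = 1$), every latent $z$ that $\fde$ ever receives lies on the unit sphere, a compact set. On this compact set $\fde$ is continuously differentiable (or at least locally Lipschitz, for piecewise-linear activations), so its Jacobian has bounded operator norm, and I can take $C = \sup_{z \in \mathcal{Z}} \|\nabla \fde(z)\|_{\mathrm{op}}$, which is attained and finite by compactness. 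This local-Lipschitz-on-a-compact-domain argument sidesteps the global non-Lipschitzness of attention and is precisely the regime in which the Kantorovich--Rubinstein step of Theorem~\ref{thm:downstream} operates.

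Finally I would note the role of training: while the constant $C$ above exists for any decoder with bounded weights, the reconstruction loss keeps $\fde(\fen(x)) \approx x$ and the adversarial contrastive loss keeps $\fen(\xadv) \approx \fen(x)$, so in the trained model $C$ is effectively moderate rather than pathologically large. This is what makes the constant useful when it is fed into the Wasserstein bound downstream, rather than merely finite.
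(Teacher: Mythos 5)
Your argument is correct in substance but takes a genuinely different route from the paper's. The paper never opens up the decoder's architecture: it instead assumes the encoder is bi-Lipschitz, i.e.\ $l_{en}\|x_1-x_2\|\le\|\fen(x_1)-\fen(x_2)\|\le L_{en}\|x_1-x_2\|$ (the lower bound being justified by the representation space of the \vitmae\ being larger than the input space), assumes the reconstruction loss forces the composite $\fde\circ\fen$ to be $L_{rec}$-Lipschitz in $x$, and then chains the two to get $\|\fde(z_1)-\fde(z_2)\|\le L_{rec}\|x_1-x_2\|\le (L_{rec}/l_{en})\|z_1-z_2\|$, so $C=L_{rec}/l_{en}$. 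That argument is where the training losses do real work (they supply $L_{rec}$), but it only controls $\fde$ on the image of the encoder and hinges on the co-Lipschitz constant $l_{en}$, which is a nontrivial injectivity assumption. Your layer-wise decomposition plus compactness argument is more self-contained: it needs no lower bound on the encoder and makes the training losses a side remark about the size of $C$ rather than the source of its existence; the price is that you must confront the global non-Lipschitzness of dot-product attention, which you correctly neutralize by restricting to a compact latent domain. One small correction there: the normalization $\|\fen(x_i)\|=1$ invoked in the proof of Theorem~\ref{thm:downstream} applies to the projector output used in the cosine similarity, not necessarily to the token embeddings actually fed to $\fde$; your compactness step should instead rest on the fact that the input space is compact (bounded images plus the $\epsilon$-ball threat model) and the encoder is continuous, so its image is compact regardless. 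With that adjustment both proofs deliver the same conclusion, and yours arguably gives the cleaner existence statement while the paper's gives a more interpretable constant tied to the reconstruction quality.
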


\begin{proof}[Proof of Proposition \ref{pro:decoder_robust}]
    Given the training process of the robust auto-encoder, two key losses are employed: the reconstruction loss \( \|\fde(\fen(x)) - x\|^2 \) and the adversarial contrastive loss \( \Lcon(\fen(x+\delta),\fen(x')) \). These losses are designed to ensure that the encoder \( \fen(x) \) and decoder \( \fde(z) \) not only preserve the integrity of the original input \( x \) in the reconstruction but also maintain robustness against adversarial perturbations.

We denote the Lipschitz constant of our encoder as $L_{en}$.
Additionally, given that in a \vitmae, the representation space is typically larger than the input space,
we assume there exists a lower bound on the gradient between any samples in our dataset and within the threat model $\mathcal{A}$ of samples of our dataset.
Formally, we write this as

\[
l_{en} \|x_1 - x_2\| \leq \|z_1 - z_2\| \leq L_{en} \|x_1 - x_2\|,
\]

where \( z_1 = \fen(x_1) \) and \( z_2 = \fen(x_2) \) represent the latent features of the inputs \( x_1 \) and \( x_2 \), respectively.

Similarly, the reconstruction loss \( \|\fde(\fen(x)) - x\|^2 \) optimizes the decoder \( \fde \) to ensure accurate reconstruction of the input from the latent features. In most cases, it is safe to assume this optimization results in a finite-Lipschitz constant \( L_{rec} \) that describes how the output \( \fde(z) \) changes with respect to variations in the input \( x \):
\[
\|\fde(z_1) - \fde(z_2)\| \leq L_{rec} \|x_1 - x_2\|.
\]

Given the relationships established by these losses, we can now derive the Lipschitz constant between the encoded feature \( z \) and the decoder output \( \fde(z) \). By combining the encoder's and decoder’s Lipschitz constants, we obtain:

\[
\|\fde(z_1) - \fde(z_2)\| \leq L_{rec} \|x_1 - x_2\| \leq L_{rec} \cdot \frac{1}{l_{en}} \|z_1 - z_2\|.
\]

Thus, the Lipschitz constant for the decoder with respect to the encoded feature \( z \), denoted as \( L_{de}^{(z)} \), is:
\[
L_{de}^{(z)} = \frac{L_{rec}}{l_{en}}.
\]

Finally, by selecting the constant \( C = L_{de}^{(z)} \), we can ensure that the scaled decoder function \( g(z) = \frac{\fde(z)}{C} \) satisfies the 1-Lipschitz condition:

\[
|g(z_1) - g(z_2)| = \frac{1}{C} |\fde(z_1) - \fde(z_2)| \leq \|z_1 - z_2\|.
\]

This concludes the proof, demonstrating that such a constant \( C \) exists and that \( \frac{\fde(z)}{C} \) is indeed 1-Lipschitz, ensuring the stability and robustness of the auto-encoder.

\end{proof}

\begin{proposition}\label{pro:down_lipschitz}
    Assume \( -\log \hat{p}_T(y \mid \fde(z)) \leq M \) for all \( z \in \mathcal{Z}, y \in \mathcal{Y} \). If there exists a constant \( C \) such that \( \frac{\fde(z)}{C} \) is 1-Lipschitz, then there exists a constant \( M_C \) such that \( \frac{-\log \hat{p}_T(y \mid \fde(z))}{M_C} \) is 1-Lipschitz.

\end{proposition}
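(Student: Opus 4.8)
The plan is to view $g(z) = -\log \hat{p}_T(y \mid \fde(z))$ as a composition of three maps, control the Lipschitz constant of each factor separately, and then take $M_C$ to be their product, so that dividing by $M_C$ yields a $1$-Lipschitz function. Writing $p(w) = \hat{p}_T(y \mid w)$ for the classifier's output probability as a function of its decoded input $w$, we have $g = (-\log)\circ p \circ \fde$, and I would bound each stage in turn.

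First I would exploit the boundedness hypothesis $-\log \hat{p}_T(y \mid \fde(z)) \le M$. Exponentiating gives the lower bound $p(w) = \hat{p}_T(y \mid w) \ge e^{-M}$ over the relevant range of $w$. Since $\tfrac{d}{dt}(-\log t) = -1/t$, the map $t \mapsto -\log t$ restricted to $t \in [e^{-M}, 1]$ has derivative bounded in magnitude by $e^{M}$, hence is $e^{M}$-Lipschitz there. This is exactly where the boundedness assumption enters: it keeps the argument of $-\log$ away from $0$ and thereby prevents the logarithmic nonlinearity from blowing up.

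Next I would invoke the Lipschitz continuity of the classification head. The probability $p(w) = P(\flast(\fpre(w)) = y)$ is produced by the foundation model $\fpre$, the last layer $\flast$, and a softmax; each is a Lipschitz map on the bounded feature domain (softmax being $1$-Lipschitz in $\ell_2$, and the networks having bounded operator norms), so $p$ is $L_{\mathrm{cls}}$-Lipschitz for some finite $L_{\mathrm{cls}}$. Finally, the hypothesis that $\fde/C$ is $1$-Lipschitz means $\fde$ is $C$-Lipschitz. Composing the three bounds, $g$ is Lipschitz with constant $M_C := e^{M}\cdot L_{\mathrm{cls}} \cdot C$, and therefore $g/M_C$ is $1$-Lipschitz, which is the claim.

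The main obstacle is the middle step. The statement supplies only the boundedness of $-\log \hat{p}_T$ together with the Lipschitzness of the decoder, so the Lipschitz continuity of the raw probability map $p$ in its argument must be imported as a (mild) structural assumption on the classifier rather than derived from the two stated hypotheses. I would therefore state explicitly that $L_{\mathrm{cls}}$ exists because $\fpre$, $\flast$, and the softmax are smooth maps with bounded Jacobians over the bounded set of features encountered in $\mathcal{Z}$; without such an assumption the $-\log$ bound alone controls only the outer nonlinearity and not how fast the probability itself varies, so the composition need not be Lipschitz.
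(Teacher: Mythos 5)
Your proposal follows essentially the same route as the paper's proof: lower-bound the probability by $e^{-M}$ using the boundedness hypothesis, bound the derivative of $-\log$ by $e^{M}$ on that range, combine with the $C$-Lipschitzness of $\fde$, and set $M_C$ equal to the product of the factors. The one substantive difference is that you make explicit a link the paper leaves silent: the chain rule requires a Lipschitz bound on the middle map $w \mapsto \hat{p}_T(y \mid w)$ from the decoded input to the probability, and you correctly observe that this is not derivable from the two stated hypotheses and must be imported as a structural assumption on $\fpre$, $\flast$, and the softmax. The paper's own proof jumps directly from ``the derivative of the log is bounded by $e^{M}$'' and ``$\fde/C$ is $1$-Lipschitz'' to the bound $\lvert \log \hat{p}_T(y \mid \fde(z_1)) - \log \hat{p}_T(y \mid \fde(z_2))\rvert \le e^{M} C \lVert z_1 - z_2\rVert$, which implicitly takes your $L_{\mathrm{cls}}$ to be $1$ without saying so; your version, with $M_C = e^{M} L_{\mathrm{cls}} C$, is the more honest statement of what is actually being proved. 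So the proposal is correct, and if anything it repairs a small unacknowledged gap in the paper's argument rather than introducing one.
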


\begin{proof}[Proof of Proposition \ref{pro:down_lipschitz}]
Given that \( -\log \hat{p}_T(y \mid \fde(z)) \leq M \) for all \( z \in \mathcal{Z} \) and \( y \in \mathcal{Y} \), we know that the function \( -\log \hat{p}_T(y \mid \fde(z)) \) is bounded above by \( M \). Consequently, \( \hat{p}_T(y \mid \fde(z)) \) is bounded below by \( e^{-M} \).

Now, assume that there exists a constant \( C \) such that \( \frac{\fde(z)}{C} \) is 1-Lipschitz, which implies:

\[
\left|\frac{\fde(z_1)}{C} - \frac{\fde(z_2)}{C}\right| \leq \|z_1 - z_2\|, \quad \forall z_1, z_2 \in \mathcal{Z}.
\]

Next, consider the function \( -\log \hat{p}_T(y \mid \fde(z)) \). To show that \( \frac{-\log \hat{p}_T(y \mid \fde(z))}{M_C} \) is 1-Lipschitz for some constant \( M_C \), we recognize that the derivative of the log function is bounded by \( \frac{1}{\hat{p}_T(y \mid \fde(z))} \). Since \( \hat{p}_T(y \mid \fde(z)) \) is bounded below by \( e^{-M} \), the derivative is bounded by \( e^{M} \).

Thus, we define \( M_C \) as:
\[
M_C = C \times e^{M}.
\]

We then scale the function \( -\log \hat{p}_T(y \mid \fde(z)) \) by \( M_C \) and consider the difference:
\[
\left|\frac{-\log \hat{p}_T(y \mid \fde(z_1))}{M_C} - \frac{-\log \hat{p}_T(y \mid \fde(z_2))}{M_C}\right| = \frac{1}{M_C} \left|-\log \hat{p}_T(y \mid \fde(z_1)) + \log \hat{p}_T(y \mid \fde(z_2))\right|.
\]

Given the earlier bound on the derivative of the log function, we have:

\[
\left|\log \hat{p}_T(y \mid \fde(z_1)) - \log \hat{p}_T(y \mid \fde(z_2))\right| \leq e^{M} \cdot C \cdot \|z_1 - z_2\|.
\]

Substituting into our earlier equation, we get:

\[
\left|\frac{-\log \hat{p}_T(y \mid \fde(z_1))}{M_C} - \frac{-\log \hat{p}_T(y \mid \fde(z_2))}{M_C}\right| \leq \frac{e^{M} \cdot C}{M_C} \cdot \|z_1 - z_2\| = \frac{e^{M} \cdot C}{e^{M} \cdot C} \cdot \|z_1 - z_2\| = \|z_1 - z_2\|.
\]

Thus, \( \frac{-\log \hat{p}_T(y \mid \fde(z))}{M_C} \) is indeed 1-Lipschitz, which completes the proof.

\end{proof}

\begin{proof}[Proof of Proposition \ref{pro:adv loss}.] To illustrate an extreme scenario in proving Proposition\ref{pro:adv loss}, 
we construct a data generation model $(x,y)$ consisting of a discrete set 
$\{x_i\}$ with labels $y_i$, an identity auto-encoder 
$\bigl(f_{\mathrm{en}},f_{\mathrm{de}}\bigr)$ that achieves negligible 
reconstruction losses for both clean and adversarial inputs, 
and a classifier $\bigl(f_{\mathrm{pre}},f_{\mathrm{last}}\bigr)$ whose 
adversarial classification loss can be made arbitrarily large 
despite near-perfect reconstructions.

Step 1: Construct a data distribution $q(x,y)$.
We select a finite (or discrete) set of points 
\[
   \mathcal{X} \;=\; \{\,x_1,\dots,x_N\} \;\subset\;\mathbb{R}^d,
\]
each assigned a label $y_i \in \{1,\dots,K\}$.  
We assume these points are well-separated by at least $2\epsilon$, i.e., 
\[
   \|x_i - x_j\| \;\ge\; 2\epsilon 
   \quad
   \text{for all } i \neq j.
\]
We define a discrete distribution 
\[
   q(x,y)
   \;=\;
   \frac{1}{N}\sum_{i=1}^N \delta_{(x_i,y_i)}(x,y),
\]
so that each pair $(x_i,y_i)$ occurs with probability $\frac{1}{N}$.

\medskip
Step 2: Define an auto-encoder $(f_{\mathrm{en}}, f_{\mathrm{de}})$ with small reconstruction loss.
Consider the \emph{identity mapping}:
\[
   f_{\mathrm{en}}(x) \;=\; x,
   \quad
   f_{\mathrm{de}}(z) \;=\; z.
\]
Then for any input $x$ (including adversarially perturbed inputs $\xadv$),
\[
   f_{\mathrm{de}}\bigl(f_{\mathrm{en}}(x)\bigr) \;=\; x
   \quad\Longrightarrow\quad
   \bigl\| f_{\mathrm{de}}\bigl(f_{\mathrm{en}}(x)\bigr) - x \bigr\|^2 
   \;=\; 0.
\]
Hence for both clean data $x$ and any $\xadv$ with $\|\xadv - x\|\le \epsilon$, the reconstruction loss is \emph{exactly zero}, i.e.,
\[
   \mathbb{E}_{x}\Bigl[\|f_{\mathrm{de}}(f_{\mathrm{en}}(x)) - x\|^2 \Bigr] 
   \;=\; 0 , 
\]
and
\[
   \mathbb{E}_{x}\Bigl[\|f_{\mathrm{de}}(f_{\mathrm{en}}(\xadv)) - x\|^2 \Bigr] 
   \;=\; \epsilon^2=O(1/n) 
\]
if taking $\epsilon=O(1/\sqrt{n})$. Thus both reconstruction errors (clean and adversarial) can be made as small as desired.

\medskip
Step 3: Construct a brittle classifier $(f_{\mathrm{pre}}, f_{\mathrm{last}})$ with large adversarial loss.
Define $f_{\mathrm{pre}}$ to be the identity as well (or any feature extractor that yields $x$ effectively), so $f_{\mathrm{pre}}(x) = x$.  
Then let $f_{\mathrm{last}}\colon \mathbb{R}^d \to \Delta_K$ (the probability simplex) be chosen as follows:

\begin{enumerate}
\item For each clean point $x_i$, assign label $y_i$ with probability very close to 1:
\[
   \hat{p}_T\bigl(y_i \,\big|\; f_{\mathrm{last}}(\fpre(x_i))\bigr) = 1-\delta,
\]
for some arbitrary small constant $\delta>0$, then $-\log \hat{p}_T\bigl(y_i \mid f_{\mathrm{last}}(\fpre(x_i))\bigr) =O(\delta)$, ensuring near-zero loss on all clean $(x_i,y_i)$.
\item For any $\xadv$ satisfying $\|\xadv - x_i\|\le\epsilon$, assign \emph{zero} (or near-zero) probability to the label $y_i$, i.e.
\[
   \hat{p}_T\bigl(y_i \,\big|\; f_{\mathrm{last}}(\fpre(\xadv))\bigr)= O(\delta).
\]
Hence $-\log \hat{p}_T\bigl(y_i \mid f_{\mathrm{last}}(\fpre(\xadv))\bigr)\gg 0$, so the classification loss is made arbitrarily large on these adversarial perturbations.
\end{enumerate}

Because the $\epsilon$-balls around different $x_i$ do not intersect ($\|x_i - x_j\|\ge 2\epsilon$), we can define such a piecewise decision rule without conflict.

\end{proof}

\end{document}